\pgfplotsset{compat=1.10}
\newcommand{\iid}{\overset{\textup{iid}}{\sim}}
\theoremstyle{plain}
\newtheorem{theorem}{Theorem}
\newtheorem{prop}{Proposition}
\newtheorem{propu}{Proposition}
\theoremstyle{definition}
\newtheorem{definition}{Definition}
\theoremstyle{remark}
\newtheorem{remark}{Remark} 
\title{MCMC for Bayesian estimation of Differential Privacy from Membership Inference Attacks}
\author[1]{Ceren Yıldırım}
\author[1,2]{Kamer Kaya}
\author[1,2]{Sinan Yıldırım\,\orcidlink{0000-0001-7980-8990}}
\author[1]{Erkay Savaş}
\affil[1]{Sabancı University, Faculty of Natural Sciences and Engineering, İstanbul 34956, Turkey\par\texttt{\{cerenyildirim, kaya, sinanyildirim, erkays\}@sabanciuniv.edu}}
\affil[2]{Sabancı University, VERIM, 34956, İstanbul, Turkey}
\begin{document}
\maketitle

\begin{abstract}
We propose a new framework for Bayesian estimation of differential privacy, incorporating evidence from multiple membership inference attacks (MIA). Bayesian estimation is carried out via a Markov chain Monte Carlo (MCMC) algorithm, named \texttt{MCMC-DP-Est}, which provides an estimate of the full posterior distribution of the privacy parameter (e.g., instead of just credible intervals). Critically, the proposed method does \emph{not} assume that privacy auditing is performed with the most powerful attack on the worst-case (dataset, challenge point) pair, which is typically unrealistic. Instead, \texttt{MCMC-DP-Est} jointly estimates the strengths of MIAs used \emph{and} the privacy of the training algorithm, yielding a more cautious privacy analysis. We also present an economical way to generate measurements for the performance of an MIA that is to be used by the MCMC method to estimate privacy. We present the use of the methods with numerical examples with both artificial and real data.

\end{abstract}

\section{Introduction} \label{sec: Introduction}

Differential privacy (DP) has emerged as a gold standard for quantifying and guaranteeing privacy in data analysis and machine learning \citep{Dwork_2006, Dwork_and_Roth_2014}. DP provides a mathematical framework to limit the impact of an individual's data on the output of a random algorithm, enabling robust privacy guarantees regardless of the adversary’s knowledge other than that individual. DP is defined below.

\begin{definition}[DP] \label{defn: Differential privacy}
An algorithm $\mathcal{A}$ with input space $\bigcup_{n = 1}^{\infty} \mathcal{Z}^{n}$ and output space $\Omega$ is $(\epsilon, \delta)$-DP if for every $n \in \mathbb{N}$, $D \in \mathcal{Z}^{n}$, $z \in \mathcal{Z}$, and $E \subseteq \Omega$, we have
\begin{align*}
P(\mathcal{A}(D) \in E) \leq e^{\epsilon} P(\mathcal{A}(D \cup \{z \}) \in E) + \delta,\\
P(\mathcal{A}(D \cup \{z\}) \in E) \leq e^{\epsilon} P(\mathcal{A}(D) \in E) + \delta.
\end{align*}
\end{definition}
Quantifying the privacy of practical implementations remains a challenging task. Theoretical lower bounds for $(\epsilon, \delta)$ have been extensively analyzed for a variety of mechanisms, such as noise-adding mechanisms~(Laplace, Gaussian, etc.) \citep{Dwork_et_al_2006, Dwork_and_Roth_2014}, subsampling \citep{Balle_et_al_2018}, and their composition \citep{Kairouz_et_al_2015}. Other theoretical definitions of DP have also been used to derive lower bounds of DP \citep{Bun_and_Steinke_2016, Dong_et_al_2022, Mironov_2017}. However, theoretical lower bounds are not tight for many practical algorithms whose revealed outputs result from a series of calculations involving randomness. An example is when a training algorithm outputs \emph{only} its final model, possibly following randomizing steps like random initialization, random updates (e.g., due to subsampling and/or noisy gradients), or output perturbation. In such a case, a large gap between the theoretical bounds and the actual privacy is shown to exist \citep{Nasr_et_al_2025}. The privacy-auditing, or privacy estimation, of such complex but practical algorithms, through numerical estimation of $(\epsilon, \delta)$, has become an emerging research line \citep{Andrew_et_al_2024, Hyland_and_Tople_2022, Jagielski_et_al_2020, Lu_et_al_2022, Maddock_et_al_2023, Nasr_et_al_2023, Nasr_et_al_2021, Leemann_et_al_2023, Nasr_et_al_2025, Pillutla_et_al_2023, Steinke_et_al_2023, Zanella-Beguelin_et_al_2023}. This study follows this line by proposing a new framework for Bayesian privacy estimation.

Privacy auditing methods leverage the relation between DP and \emph{Membership Inference Attacks}~(MIA) \citep{Shokri_et_al_2017, Yeom_et_al_2018, Carlini_et_al_2022, Ye_et_al_2022} to estimate $\epsilon$ and $\delta$. This is because DP particularly guarantees the protection of sensitive input data against MIAs. While there are different types of MIAs, by Def. \ref{defn: Differential privacy}, the leave-one-out attack (L-attack in \citet{Ye_et_al_2022}) is the one directly relevant to the DP of a private algorithm. In an L-attack, a data set $D$, a point $z \in \mathcal{Z}$, and a random output of $\mathcal{A}$ are given to the attacker who uses the given information to infer whether $D$ or $D \cup \{ z \}$ was used by $\mathcal{A}$. 
For \emph{any} $D, z$ and \emph{any} test, the type I and type II error probabilities are lower-bounded by a curve determined by the $(\epsilon, \delta)$ of $\mathcal{A}$, see Theorem \ref{thm: privacy and errors}.

Two main issues deserve caution in empirical privacy analysis based on MIAs.
\begin{enumerate}
\item \emph{The attack strategy to audit privacy is typically not the strongest}: For a given pair $(D, z)$, the strongest attack that decides based on $\mathcal{A}$'s output is known to be the likelihood ratio test (LRT). However, practical MIAs to audit privacy merely approximate the LRT \citep{Sablayrolles_et_al_2019} with a limited computational budget, e.g.\ using metrics that are loss-based \citep{Yeom_et_al_2018, Ye_et_al_2022, Carlini_et_al_2022}, gradient-based \citep{Nasr_et_al_2023}, etc. On the other hand, an adversary can design more powerful attacks than the one used for auditing, provided computational budget. It is also difficult to analytically characterize the gap between the performance of a given MIA and the LRT. Therefore, treating a given MIA as \emph{the} strongest attack may lead to overconfident estimates about the privacy of an algorithm (`overconfident', because weaker attacks imply stronger privacy; see Remark \ref{rem: overconfidence}). This is particularly dangerous since, based on overconfident estimations, private data may be leaked to a greater extent than it is permitted.

\item \emph{The challenge base $(D, z)$ may not be the absolute ``worst-case'' challenge base}. Moreover, existing efforts such as \citet{Nasr_et_al_2021, Lu_et_al_2022, Zanella-Beguelin_et_al_2023, Pillutla_et_al_2023} do not \emph{theoretically} guarantee to find such a point. Therefore, \emph{even if} the strongest attack is applied on the challenge base $(D, z)$, its observed false positive and false negative counts should \emph{not} directly be used to upper bound $\epsilon$. This is because tests on another $(D, z)$ challenge base could result in a larger upper bound on $\epsilon$.
\end{enumerate}




Considering the above issues, we propose a new Bayesian methodology for empirical privacy estimation. Our contributions are as follows.
\begin{enumerate}
\item \textbf{A new posterior sampling method for DP estimation:} We develop a joint probability distribution for the privacy parameter, attack strengths, and false negatives/positives involved in \emph{block-box} auditing a private algorithm, where one has access only to the output of $\mathcal{A}$ and not its intermediate results. Suited to this model, we propose a Markov Chain Monte Carlo (MCMC) method named \texttt{MCMC-DP-Est} (Alg.\ \ref{alg: MCMC-DP-Est}), adopted from \citet{Andrieu_et_al_2020}, for Bayesian estimation of privacy. The advantages of \texttt{MCMC-DP-Est} are as follows:
\begin{itemize}
\item \textbf{Full posterior distribution:} Beyond credible intervals, it returns the \emph{full posterior distribution} of the privacy parameters as in \citet{Zanella-Beguelin_et_al_2023}.
\item \textbf{Combining multiple results:} With the fully Bayesian treatment, the algorithm can combine the false negative and false positive counts from \emph{multiple} $(D, z)$ points (and possibly from multiple attack strategies). In particular, \emph{no tried attacks need to be thrown away}. This also enables leveraging new $(D, z)$ points or attack strategies to refine the privacy estimates coherently.
\item \textbf{Cautious treatment of attack strengths:} Related to the above discussion, the probabilistic model based on which MCMC is used in this work does \emph{not} assume that the attack used is the strongest attack possible under the privacy constraint or it is performed on the worst-case $(D, z)$ pair (though it can be adapted to include such cases). Instead, we parametrize the average strength of the applied tests/challenge bases by a parameter $s \in [0, 1]$, and use MCMC to jointly estimate both $s$ and $\epsilon$.
\end{itemize}

\item \textbf{A method for measuring MIA performance:} We present a parametric loss-based MIA, adopting LiRA \citet{Carlini_et_al_2022}, to feed privacy auditing methods (including ours) with informative error counts. We propose a computationally efficient way to measure the MIA performance. The measurements, the numbers of false positives and false negatives, are to be fed to the MCMC algorithm as observations. 

\item {\textbf{An extension of the joint probability model} that allows statistical dependencies among attacks is also discussed briefly in Remark \ref{rem: Dependent challenge bases and MIAs} in Section \ref{sec: Joint probabilistic model for privacy- and MIA-related variables} and more in detail in Appendix \ref{appndx: Modeling Dependent MIAs}. Attack results can be statistically dependent, for example, when a common $z$ point is paired with different $D$ datasets from the population.}

\end{enumerate}

Section \ref{sec: Bayesian estimation of privacy} presents the joint probability model and \texttt{MCMC-DP-Est}. Section \ref{sec: The MIA attack and measuring its performance} presents an MIA and an experiment design to collect performance measures for the attack, which are to be fed to the MCMC algorithm as observations. Section \ref{sec: Experiments} presents the experiments. Section \ref{sec: Discussion} concludes the paper.

\subsection{Related work} \label{sec: Related work}
Privacy estimation methods exploit theoretical results and approach their guarantees empirically. For example, the privacy estimation in \citet{Hyland_and_Tople_2022} estimates the privacy of stochastic gradient descent (SGD) based on the relationship between the sensitivity of the output and privacy. However, in most studies, the relation between MIAs and Definition \ref{defn: Differential privacy} of DP has been exploited. For example, \citet{Jagielski_et_al_2020} derives Clopper-Pearson confidence intervals for $\epsilon$ from MIAs carefully designed for SGD with clipping. \citet{Nasr_et_al_2021} uses Clopper-Pearson confidence intervals, too, but additionally finds the worst-case pairs $(D, z)$ to improve the bound on the privacy estimates. In contrast to frequentist estimates in \citet{Jagielski_et_al_2020, Nasr_et_al_2021}, Bayesian privacy estimation is proposed in \citet{Zanella-Beguelin_et_al_2023}, where Bayesian \emph{credible intervals} are provided for $\epsilon$. Privacy estimation has also been extended to other definitions of privacy \citep{Leemann_et_al_2023, Nasr_et_al_2023, Pillutla_et_al_2023} and to federated learning \citep{Andrew_et_al_2024, Maddock_et_al_2023}.

The quality of privacy estimation through MIAs depends heavily on the quality of MIAs, i.e., their power to distinguish membership and non-membership. Several MIAs (tests) have been proposed in the literature. If black-box privacy-auditing is performed, the loss function of the trained model is typically involved in the MIA decision rule, and using the loss function is shown to be Bayes optimal in \citet{Sablayrolles_et_al_2019}. The LOSS attack \citep{Yeom_et_al_2018} uses the loss function as its test statistic. This attack is also used as an approximation of the LRT under some conditions~\citep{Ye_et_al_2022} that correspond to the output of the training model behaving like a sample from its posterior distribution. The LOSS attack has been reported to be weak in identifying memberships (TPs) and strong in identifying non-memberships (FPs) \citep{Carlini_et_al_2022}. As an alternative, \citet{Carlini_et_al_2022} proposes LiRA, a more direct approximation of the LRT that considers the distribution of the loss function under both hypotheses.

\section{Bayesian estimation of privacy} \label{sec: Bayesian estimation of privacy}
We present the joint probability distribution for the variables regarding the privacy of $\mathcal{A}$, error probabilities of MIAs, and the observed FP and FN counts for each MIA. Then, we present the MCMC algorithm for the privacy estimation of $\mathcal{A}$ according to that joint probability distribution. But first, we provide some preliminaries and introduce some concepts for clarity.
Definition \ref{defn: challenge base} assigns a specific meaning to the term `challenge base' for the clarity of presentation.
\begin{definition}[Challenge base] \label{defn: challenge base}
A challenge base is a pair $(D, z)$, where $D \in \bigcup_{n = 1}^{n} \mathcal{Z}^{n}$ and $z \in \mathcal{Z}$ with $z \notin D$.
\end{definition}
Next, we define an MIA as a statistical test specified by a challenge base, a critical region, and a private algorithm whose output serves as the observation point for that test.

\begin{definition}[MIA] \label{defn: MIA}
An MIA is a statistical test specified by $(D, z, \mathcal{A}, \phi, \alpha, \beta)$, where $\phi: \Omega \mapsto \{0, 1\}$ is a (possibly random) decision rule for the absence or presence of $z$ in the input of $\mathcal{A}$ based on a random outcome $\theta$ from $\mathcal{A}$, 
and $\alpha, \beta$ are the type I and type II error probabilities given by
\[
\alpha = P(\phi(\theta) = 1| \theta \sim \mathcal{A}(D)), \quad \beta = P(\phi(\theta) = 0 | \theta \sim \mathcal{A}(D \cup \{ z \}) ).
\]
\end{definition}

The theorem below from \citet[Theorem 2.1]{Kairouz_et_al_2017} is central to the methodology presented in this paper. The theorem sets an upper bound on the accuracy of MIAs whose sample is the output of an $(\epsilon,\delta)$-DP algorithm.
\begin{theorem} \label{thm: privacy and errors}
$\mathcal{A}$ is $(\epsilon, \delta)$-DP if and only if, for any $D \in \mathcal{Z}^{n}$ and $z \in \mathcal{Z}$, and a decision rule $\phi$, the MIA $(D, z, \phi, \mathcal{A}, \alpha, \beta)$ satisfies $(\alpha, \beta) \in \mathcal{R}(\epsilon, \delta)$, where
\[
\mathcal{R}(\epsilon, \delta) := \left\{(x,y) \in [0, 1]^{2} : \begin{matrix} x + e^{\epsilon}y \geq 1 - \delta,\,  y + e^{\epsilon}x \geq 1 - \delta, \\  y + e^{\epsilon}x \leq e^{\epsilon} + \delta, \, x + e^{\epsilon}y \leq e^{\epsilon} + \delta \end{matrix} \right\}.
\]
See Figure \ref{fig: privacy error regions} (top left) for an illustration of $\mathcal{R}(\epsilon, \delta)$. 
\end{theorem}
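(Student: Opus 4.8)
The plan is to prove the two implications separately; in both directions the bridge is the correspondence between measurable events $E \subseteq \Omega$ appearing in Definition \ref{defn: Differential privacy} and decision rules $\phi$ appearing in Definition \ref{defn: MIA}, after which the four linear constraints defining $\mathcal{R}(\epsilon,\delta)$ are recovered simply as the four instances of the DP inequalities obtained by testing an event and its complement. Throughout I write $D' := D \cup \{z\}$, and I record that the case $z \in D$ is vacuous (then $D' = D$ and every inequality is trivial), so I may restrict to valid challenge bases $z \notin D$.

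For the ``only if'' direction, assume $\mathcal{A}$ is $(\epsilon,\delta)$-DP and fix a challenge base $(D,z)$ and a possibly random decision rule $\phi$. First I would reduce to the deterministic case: realize $\phi$ as a deterministic map $\widetilde{\phi}(\theta,U)$ of the output $\theta$ together with an independent auxiliary variable $U \sim \mathrm{Unif}[0,1]$, and note that the augmented algorithm $D \mapsto (\mathcal{A}(D),U)$ is again $(\epsilon,\delta)$-DP (integrate the two inequalities of Definition \ref{defn: Differential privacy} over the law of $U$, using that $U$ is independent of $D$). Then it suffices to apply Definition \ref{defn: Differential privacy} on the augmented output space to the complementary events $F_1 = \{\widetilde{\phi}=1\}$ and $F_0 = \{\widetilde{\phi}=0\}$. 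Since $P(\widetilde{\phi}=1 \mid \mathcal{A}(D)) = \alpha$, $P(\widetilde{\phi}=1 \mid \mathcal{A}(D')) = 1-\beta$, $P(\widetilde{\phi}=0 \mid \mathcal{A}(D)) = 1-\alpha$ and $P(\widetilde{\phi}=0 \mid \mathcal{A}(D')) = \beta$, the four DP inequalities (two for $F_1$, two for $F_0$) rearrange respectively to
\[
\alpha + e^{\epsilon}\beta \le e^{\epsilon}+\delta, \quad \beta + e^{\epsilon}\alpha \ge 1-\delta, \quad \alpha + e^{\epsilon}\beta \ge 1-\delta, \quad \beta + e^{\epsilon}\alpha \le e^{\epsilon}+\delta,
\]
which is precisely the assertion $(\alpha,\beta) \in \mathcal{R}(\epsilon,\delta)$.

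For the ``if'' direction, assume every MIA $(D,z,\mathcal{A},\phi,\alpha,\beta)$ satisfies $(\alpha,\beta) \in \mathcal{R}(\epsilon,\delta)$, and fix $n \in \mathbb{N}$, $D \in \mathcal{Z}^{n}$, $z \in \mathcal{Z}$ with $z \notin D$, and a measurable $E \subseteq \Omega$. Choosing the deterministic decision rule $\phi = \mathbf{1}_{E}$ gives $\alpha = P(\mathcal{A}(D) \in E)$ and $\beta = 1 - P(\mathcal{A}(D') \in E)$. Substituting these into the two constraints $\alpha + e^{\epsilon}\beta \le e^{\epsilon}+\delta$ and $\beta + e^{\epsilon}\alpha \ge 1-\delta$ of $\mathcal{R}(\epsilon,\delta)$ and rearranging yields $P(\mathcal{A}(D)\in E) \le e^{\epsilon} P(\mathcal{A}(D')\in E) + \delta$ and $P(\mathcal{A}(D')\in E) \le e^{\epsilon} P(\mathcal{A}(D)\in E) + \delta$, i.e.\ both inequalities of Definition \ref{defn: Differential privacy}. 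Since $n, D, z, E$ were arbitrary, $\mathcal{A}$ is $(\epsilon,\delta)$-DP.

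The only genuinely non-routine point I anticipate is the reduction of randomized decision rules to deterministic ones in the ``only if'' direction (equivalently, checking that appending independent randomness preserves DP); everything else is algebraic rearrangement of the DP inequalities. A secondary bookkeeping point worth stating is which constraints of $\mathcal{R}(\epsilon,\delta)$ are actually used: all four are produced in the ``only if'' direction, arising in two pairs from testing $F_1$ and its complement $F_0$ together with the two-sided form of Definition \ref{defn: Differential privacy}, whereas in the ``if'' direction only two of the four are invoked, the other two being obtained automatically as $E$ ranges over all measurable sets.
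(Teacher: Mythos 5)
Your proposal is correct: both directions check out, the four constraints of $\mathcal{R}(\epsilon,\delta)$ do arise exactly as the two DP inequalities applied to $\{\phi=1\}$ and $\{\phi=0\}$, and your reduction of a randomized decision rule to a deterministic one via independent auxiliary randomness (noting that appending independent randomness preserves $(\epsilon,\delta)$-DP) is handled properly. The paper itself gives no proof of this theorem --- it imports it directly as Theorem 2.1 of Kairouz et al.\ (2017) --- and your argument is precisely the standard hypothesis-testing proof of that result, so there is nothing methodologically different to compare.
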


\begin{remark} 
In this work, we will assume $\delta \geq 0$ fixed and known and focus on estimating the parameter $\epsilon$. However, if an algorithm is $(\epsilon_{1}, \delta)$, it is also $(\epsilon_{2}, \delta)$ for any $\epsilon_{2} > \epsilon_{1}$. To prevent ambiguity, by ``estimating privacy,'' we specifically mean estimating 
\[
\epsilon := \inf \{ \epsilon_{0} \geq 0: \mathcal{A} \text{ is } (\epsilon_{0}, \delta)\text{-DP}\}.
\]
\end{remark}

\subsection{Joint probabilistic model for privacy- and MIA-related variables} \label{sec: Joint probabilistic model for privacy- and MIA-related variables}
We present in detail the joint probability model illustrated in Figure \ref{fig: privacy error regions} (bottom).

\subsubsection{Observed error counts} 
We assume that there are $n \geq 1$ challenge bases $(D_{i}, z_{i})$, $i = 1, \ldots, n$. On each  challenge base, an MIA $(D_{i}, z_{i}, \phi_{i}, \mathcal{A}, \alpha_{i}, \beta_{i})$ is challenged $N_{i,0}, N_{i,1}$ times under $H_{0}, H_{1}$, respectively. More explicitly, for $j = 1, \ldots, N_{i,0}$, we challenge an MIA $(D_{i}, z_{i}, \phi_{i}, \mathcal{A}, \alpha_{i}, \beta_{i})$ with $\theta_{0}^{(j)}  \sim \mathcal{A}(D)$. We collect $X_{i}$, the number of false positives out of the $N_{i,0}$ challenges. Likewise, we challenge the same MIA $N_{i,1}$ times with $\theta_{1}^{(j)} \sim \mathcal{A}(D \cup \{ z \})$. We collect $Y_{i}$, the number of false negatives out of the $N_{i,1}$ challenges.

When the tests for each challenge base are independent, $X_{i}$ and $Y_{i}$ become independent binomials and their conditional distributions become
\begin{equation} \label{eq: conditional distributions for independent tests}
g(X_{i}, Y_{i} | \alpha_{i}, \beta_{i}) = \text{Binom}(X_{i} | N_{i,0}, \alpha_{i}) \times \text{Binom}(Y_{i} | N_{i,1}, \beta_{i}),
\end{equation}
where $\alpha_{i}, \beta_{i}$ are the error probabilities of the $i$'th MIA. Other distributions may arise with dependent tests, e.g.\ because of using common shadow models to learn the null and alternative hypotheses. We discuss such a case in Section \ref{sec: Measuring the performance of the MIA}.

\begin{figure}[tbh]
\begin{minipage}{0.40\textwidth}
\resizebox{\textwidth}{!}{
\begin{tikzpicture}
\pgfmathsetmacro{\dDP}{0.1}  
\pgfmathsetmacro{\eDP}{1}  
\begin{axis}[axis lines=middle,
            axis equal, 
            xlabel=$\alpha$,
            ylabel=$\beta$,
            enlargelimits,
            ytick={0, 0.9, 1},
            yticklabels={0, $1-\delta$, 1},
            xtick={0, 0.9, 1},
            xticklabel style={rotate=270},
            xticklabels={0,$1-\delta$, 1},
            xmin=-0, xmax=1.2, ymin=0, ymax=1.2,, scale = 0.7],
\addplot[name path=B1,blue,domain={0:(1 - \dDP)/(1 + e^\eDP)}] {(1-\dDP)-e^(\eDP)*(x)};
\addplot[name path=B2,blue,domain={((1 - \dDP)/(1 + e^\eDP)):(1 - \dDP)}] {(1 - \dDP)/(1+e^\eDP)-e^(-\eDP)*(x-(1 - \dDP)/(1+e^\eDP))};
\addplot[name path=B0,blue,domain={(1 - \dDP):1}] {0};
\addplot[name path=A0,blue,domain={0: \dDP}] {1};
\addplot[name path=A1,blue,domain={\dDP: (1 + \dDP*e^-\eDP)/(1+e^-\eDP)}] {1 - (x-\dDP)*e^(-\eDP)};
\addplot[name path=A2,blue,domain={(1 + \dDP*e^(-\eDP))/(1+e^(-\eDP)): 1}] {(1 + \dDP*e^(-\eDP))/(1+e^(-\eDP)) - e^(\eDP)*(x-(1 + \dDP*e^(-\eDP))/(1+e^(-\eDP)))};
\addplot[fill=yellow!50, draw = none]fill between[of=B1 and A0, soft clip={domain=0:1}];
\addplot[fill=yellow!50, draw = none]fill between[of=B1 and A1, soft clip={domain=0:1}];
\addplot[fill=yellow!50, draw = none]fill between[of=B2 and A2, soft clip={domain=0:1}];
\addplot[fill=yellow!50, draw = none]fill between[of=B0 and A2, soft clip={domain=0:1}];
\node[coordinate,pin=90:{$\mathcal{R}(\epsilon, \delta)$}] at (axis cs:0.6,0.4){};
\end{axis}
\end{tikzpicture}
}
\end{minipage}
\hfill
\begin{minipage}{0.60\textwidth}
\vspace*{-3ex}
\resizebox{\textwidth}{!}{
\begin{tikzpicture}
\pgfmathsetmacro{\dDP}{0.1}  
\pgfmathsetmacro{\eDP}{1}  
\pgfmathsetmacro{\dDPb}{0.06}  
\pgfmathsetmacro{\eDPb}{0.6}  
\begin{axis}[clip=false, axis lines=middle,
            axis equal, 
            xlabel=$\alpha$,
            ylabel=$\beta$,
            enlargelimits,
            ytick={0, 1},
            yticklabels={0, 1},
            xtick={0, 1},
            xticklabels={0, 1},
            xmin=-0, xmax=1.2, ymin=0, ymax=1.2, scale = 0.7]
\addplot[name path=B01,blue,domain={0:(1 - \dDP)/(1 + e^\eDP)}] {(1-\dDP)-e^(\eDP)*(x)};
\addplot[name path=B02,blue,domain={((1 - \dDP)/(1 + e^\eDP)):(1 - \dDP)}] {(1 - \dDP)/(1+e^\eDP)-e^(-\eDP)*(x-(1 - \dDP)/(1+e^\eDP))};
\addplot[name path=B00,blue,domain={(1 - \dDP):1}] {0};
\addplot[name path=A00,blue,domain={0: \dDP}] {1};
\addplot[name path=A01,blue,domain={\dDP: (1 + \dDP*e^-\eDP)/(1+e^-\eDP)}] {1 - (x-\dDP)*e^(-\eDP)};
\addplot[name path=A02,blue,domain={(1 + \dDP*e^(-\eDP))/(1+e^(-\eDP)): 1}] {(1 + \dDP*e^(-\eDP))/(1+e^(-\eDP)) - e^(\eDP)*(x-(1 + \dDP*e^(-\eDP))/(1+e^(-\eDP)))};
\addplot[name path=B1,green,domain={0:(1 - \dDPb)/(1 + e^\eDPb)}] {(1-\dDPb)-e^(\eDPb)*(x)};
\addplot[name path=B2,green,domain={((1 - \dDPb)/(1 + e^\eDPb)):(1 - \dDPb)}] {(1 - \dDPb)/(1+e^\eDPb)-e^(-\eDPb)*(x-(1 - \dDPb)/(1+e^\eDPb))};
\addplot[name path=B0,green,domain={(1 - \dDPb):1}] {0};
\addplot[name path=A0,green,domain={0: \dDPb}] {1};
\addplot[name path=A1,green,domain={\dDPb: (1 + \dDPb*e^-\eDPb)/(1+e^-\eDPb)}] {1 - (x-\dDPb)*e^(-\eDPb)};
\addplot[name path=A2,green,domain={(1 + \dDPb*e^(-\eDPb))/(1+e^(-\eDPb)): 1}] {(1 + \dDPb
*e^(-\eDPb))/(1+e^(-\eDPb)) - e^(\eDPb)*(x-(1 + \dDPb*e^(-\eDPb))/(1+e^(-\eDPb)))};
\addplot[fill=yellow!50, draw = none]fill between[of=A00 and A0, soft clip={domain=0:1}];
\addplot[fill=yellow!50, draw = none, pattern color=yellow!50]fill between[of=A01 and A1, soft clip={domain=0:1}];
\addplot[fill=yellow!50, draw = none]fill between[of=A02 and A2, soft clip={domain=0:1}];
\addplot[fill=yellow!50, draw = none]fill between[of=B0 and B00, soft clip={domain=0:1}];
\addplot[fill=yellow!50, draw = none]fill between[of=B1 and B01, soft clip={domain=0:1}];
\addplot[fill=yellow!50, draw = none]fill between[of=B2 and B02, soft clip={domain=0:1}];
\addplot[fill=yellow!50, draw = none]fill between[of=B2 and B00, soft clip={domain=0:1}];
\node[coordinate,pin={[pin distance =1 cm] -20:{\parbox{4 cm}{$\mathcal{R}(\epsilon, \delta)/\mathcal{R}(0.6 \epsilon, 0.6 \delta)$ \\ Prior domain for \\ $s$-stong tests}}}] at (axis cs:0.7,0.7){};
\node[coordinate,pin={[pin distance =2.3 cm] 16:{}}] at (axis cs:0.3,0.3){};
\end{axis}
\end{tikzpicture}
}
\end{minipage}

\vspace{-0.5cm}
\centerline{
\begin{tikzpicture}[
    ->,
    auto,
    node distance=1.5cm, minimum size=0cm,
    main node/.style={}
]
\node[main node] (DP) {$\epsilon, s$};
\node[main node] (T1) [below left=0.5cm and 0.5cm of DP] {$\alpha_{1}, \beta_{1}$};
\node[draw, rectangle] (X1) [below=0.5cm of T1] {$X_{1}, Y_{1}$};
\node[main node] (Tn_placeholder) [right=0.5cm of T1] {$\ldots$};
\node[main node] (Tn) [below right=0.5cm and 0.5cm of DP] {$\alpha_{n}, \beta_{n}$};
\node[draw, rectangle] (Xn) [below=0.5cm of Tn] {$X_{n}, Y_{n}$};
\path[every node]
    (DP) edge[] node[above] {} (T1)
    (DP) edge[] node[above] {} (Tn)
    (T1) edge[] node[above] {} (X1)
    (Tn) edge[] node[above] {} (Xn);
\end{tikzpicture}
\hfill
\begin{tikzpicture}
\pgfmathsetmacro{\dDP}{0.1}  
\pgfmathsetmacro{\eDP}{1}  
\pgfmathsetmacro{\dDPb}{0.03}  
\pgfmathsetmacro{\eDPb}{0.3}  
\begin{axis}[clip=false, axis lines=middle,
            axis equal, 
            xlabel=$\alpha$,
            ylabel=$\beta$,
            enlargelimits,
            ytick={0, 1},
            yticklabels={0, 1},
            xtick={0, 1},
            xticklabels={0, 1},
            xmin=-0, xmax=1.2, ymin=0, ymax=1.2, scale = 0.7]
\addplot[name path=B01,blue,domain={0:(1 - \dDP)/(1 + e^\eDP)}] {(1-\dDP)-e^(\eDP)*(x)};
\addplot[name path=B02,blue,domain={((1 - \dDP)/(1 + e^\eDP)):(1 - \dDP)}] {(1 - \dDP)/(1+e^\eDP)-e^(-\eDP)*(x-(1 - \dDP)/(1+e^\eDP))};
\addplot[name path=B00,blue,domain={(1 - \dDP):1}] {0};
\addplot[name path=A00,blue,domain={0: \dDP}] {1};
\addplot[name path=A01,blue,domain={\dDP: (1 + \dDP*e^-\eDP)/(1+e^-\eDP)}] {1 - (x-\dDP)*e^(-\eDP)};
\addplot[name path=A02,blue,domain={(1 + \dDP*e^(-\eDP))/(1+e^(-\eDP)): 1}] {(1 + \dDP*e^(-\eDP))/(1+e^(-\eDP)) - e^(\eDP)*(x-(1 + \dDP*e^(-\eDP))/(1+e^(-\eDP)))};
\addplot[name path=B1,green,domain={0:(1 - \dDPb)/(1 + e^\eDPb)}] {(1-\dDPb)-e^(\eDPb)*(x)};
\addplot[name path=B2,green,domain={((1 - \dDPb)/(1 + e^\eDPb)):(1 - \dDPb)}] {(1 - \dDPb)/(1+e^\eDPb)-e^(-\eDPb)*(x-(1 - \dDPb)/(1+e^\eDPb))};
\addplot[name path=B0,green,domain={(1 - \dDPb):1}] {0};
\addplot[name path=A0,green,domain={0: \dDPb}] {1};
\addplot[name path=A1,green,domain={\dDPb: (1 + \dDPb*e^-\eDPb)/(1+e^-\eDPb)}] {1 - (x-\dDPb)*e^(-\eDPb)};
\addplot[name path=A2,green,domain={(1 + \dDPb*e^(-\eDPb))/(1+e^(-\eDPb)): 1}] {(1 + \dDPb*e^(-\eDPb))/(1+e^(-\eDPb)) - e^(\eDPb)*(x-(1 + \dDPb*e^(-\eDPb))/(1+e^(-\eDPb)))};
\addplot[fill=yellow!50, draw = none]fill between[of=A00 and A0, soft clip={domain=0:1}];
\addplot[fill=yellow!50, draw = none, pattern color=yellow!50]fill between[of=A01 and A1, soft clip={domain=0:1}];
\addplot[fill=yellow!50, draw = none]fill between[of=A02 and A2, soft clip={domain=0:1}];
\addplot[fill=yellow!50, draw = none]fill between[of=B0 and B00, soft clip={domain=0:1}];
\addplot[fill=yellow!50, draw = none]fill between[of=B1 and B01, soft clip={domain=0:1}];
\addplot[fill=yellow!50, draw = none]fill between[of=B2 and B02, soft clip={domain=0:1}];
\addplot[fill=yellow!50, draw = none]fill between[of=B2 and B00, soft clip={domain=0:1}];
\node[coordinate,pin={[pin distance =0.5 cm] -30:{\parbox{4 cm}{ $\quad \quad \mathcal{R}_{s}(\epsilon, \delta)$}}}] at (axis cs:0.7,0.7){};
\node[coordinate,pin={[pin distance =2.3 cm] 16.5:{}}] at (axis cs:0.3,0.3){};
\node[draw, circle, inner sep=0pt, minimum size=3pt, color=red, fill=red] (A) at (20, 50) {};
\node[color=black, minimum size=3pt] at (22, 48) {*};
\node[draw, circle, inner sep=0pt, minimum size=3pt, color=red,fill=red] (A) at (60, 15) {};
\node[color=black, minimum size=3pt] at (55, 10) {*};
\node[draw, circle, inner sep=0pt, minimum size=3pt, color=red, fill=red] (A) at (40, 40) {};
\node[color=black, minimum size=3pt] at (37, 43) {*};
\node[draw, circle, inner sep=0pt, minimum size=3pt,color=red, fill=red] (A) at (10, 80) {};
\node[color=black, minimum size=3pt] at (12, 85) {*};
\end{axis}
\end{tikzpicture}
}

\caption{{\bf Top Left}: $\mathcal{R}(\alpha, \beta)$, the unconstrained prior domain ($s = 0$) for $\alpha, \beta$ of an MIA. {\bf Top Right}: $\mathcal{R}_{0.6}(\alpha, \beta)$, prior domain for $s = 0.6$. {\bf Bottom Left}: The dependency structure of the variables involved~(a fixed $\delta$ is assumed). {\bf Bottom Right:} Realization of the variables. $\epsilon$ and $s$ set the blue and green lines, respectively; $(\alpha_{i}, \beta_{i})$ and $(X_{i}/N_{i,0}, Y_{i}/N_{i,1})$ are shown with red and black points, resp.}
\label{fig: privacy error regions}
\end{figure}
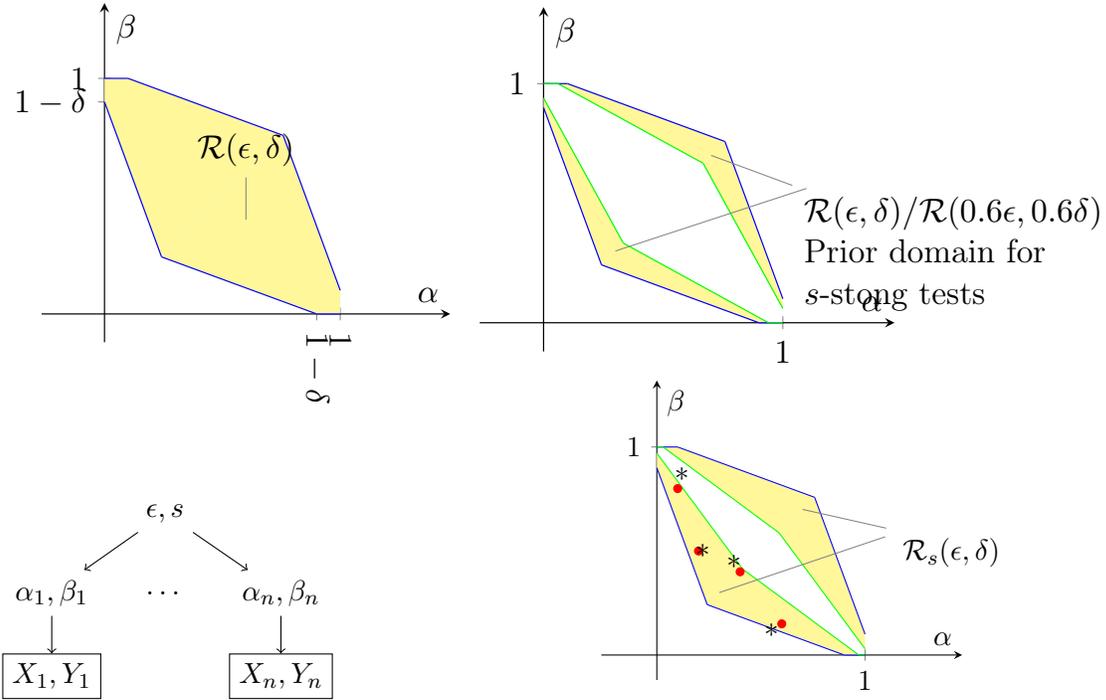

\subsubsection{True error probabilities} 
The performance of an MIA depends on $\mathcal{A}$, the challenge base $(D, z)$, as well as the decision rule $ \phi$. When we have little knowledge about the performance of a test, a convenient choice for its conditional prior distribution for $(\alpha_{i}, \beta_{i})$ given $(\epsilon, \delta)$ is the uniform distribution over $\mathcal{R}(\epsilon, \delta)$.
However, uniformity over $R(\epsilon, \delta)$ may be a loose assumption for carefully designed attacks. Indeed, several works in the literature study the design of powerful MIAs by approximating the LRT  \citep{Carlini_et_al_2022, Nasr_et_al_2021, Ye_et_al_2022, Yeom_et_al_2018} or finding the worst-case (or ``best-case'' from the attacker's point of view) tuple $(D, z)$, or both. When such techniques are involved, the prior of $\alpha_{1:n}, \beta_{1:n}$ given $\epsilon, \delta$ can be modified as
\begin{equation} \label{eq: iid MIAs}
(\alpha_{1}, \beta_{1}), \ldots, (\alpha_{n}, \beta_{n}) | \epsilon, \delta \iid \text{Unif}(\mathcal{R}_{s}(\epsilon, \delta)),\quad \mathcal{R}_{s}(\epsilon, \delta) = \mathcal{R}(\epsilon, \delta) \backslash \mathcal{R}(s \epsilon, s \delta).
\end{equation}
Here, $s \in (0, 1)$ is a strength parameter for the test; the closer it is to 1, the stronger the test is expected. The parameter $s$ itself can be modeled as a random variable, for example, as $s \sim \text{Beta}(a, b)$, and it can also be estimated from the results of multiple MIAs. The pdf of $(\alpha_{i}, \beta_{i})$ in the modified case is
\begin{equation} \label{eq: prior with s density}
p_{s}(\alpha_{i}, \beta_{i} | \epsilon, \delta) = \mathbb{I}((\alpha, \beta) \in \mathcal{R}_{s}(\epsilon, \delta)) / |\mathcal{R}_{s}(\epsilon, \delta)|,
\end{equation}
where 
\[
|\mathcal{R}_{s}(\epsilon, \delta)|= 2 [ (1 - s \delta)^{2}  e^{-s\epsilon}/(1 + e^{-s\epsilon}) - (1 - \delta)^{2}  e^{-\epsilon}/(1 + e^{-\epsilon}) ]
\]
is the area of $\mathcal{R}_{s}(\epsilon, \delta)$.
Figure \ref{fig: privacy error regions} illustrates this prior for $s = 0$ (top left) and $s = 0.6$ (top right). 

\begin{remark}[The special case $s = 1$] \label{rem: overconfidence}
The choice $s = 1$ corresponds to the assumption that the strongest possible attacks are used to generate $(X_{i}, Y_{i})$ pairs. Several studies make this assumption implicitly, relying on the quality of their MIAs \citep{Jagielski_et_al_2020, Nasr_et_al_2021, Zanella-Beguelin_et_al_2023}. In many cases, the assumption is too strong since determining the worst-case challenge base $(D, z)$ \emph{and} using the most powerful test for that couple is usually intractable. As a result, taking $s = 1$ may lead to overconfident estimations about $\epsilon$. (See Figure \ref{fig: CI Example}.) More concretely, the relation between $\epsilon$ and $(\alpha, \beta)$ can be written as
\[
\{ \epsilon \leq \epsilon_{0} \} \Leftrightarrow \left\{ (\alpha, \beta) \in \mathcal{R}(\epsilon_{0}, \delta) \textit{ for any \textup{MIA}$ (D, z, \mathcal{A}, \phi, \alpha, \beta)$} \right\}.
\]
When a \emph{particular} MIA is concerned, the above gives a one-way implication as
\[
\textit{for any \textup{MIA} $(D, z, \mathcal{A}, \phi, \alpha, \beta)$}, \quad \{\epsilon \leq \epsilon_{0}\} \Rightarrow \{(\alpha, \beta) \in \mathcal{R}(\epsilon_{0}, \delta)\},
\]
which leads to 
\begin{equation} \label{eq: correct inequality of probabilities}
P(\epsilon \leq \epsilon_{0}) \leq P [(\alpha, \beta) \in \mathcal{R}(\epsilon_{0}, \delta) ].
\end{equation}
Replacing the inequality in \eqref{eq: correct inequality of probabilities} with equality is equivalent to taking $s = 1$, which would be valid only when the LRT is applied exactly \emph{and} on the worst-case challenge base $(D, z)$, which is typically not guaranteed in practice. In the absence of that strong condition, $s = 1$ leads to early saturation of the cdf $P(\epsilon \leq \epsilon_{0})$ vs $\epsilon_{0}$ and results in overconfident (credible) intervals for $\epsilon_{0}$. Section \ref{sec: Privacy Estimation with artificial test performance results} numerically demonstrates the effect of $s$ in privacy estimation.
\end{remark}

\begin{remark}[Dependent challenge bases and MIAs] \label{rem: Dependent challenge bases and MIAs}
In \eqref{eq: iid MIAs}, the MIA performances are assumed conditionally independent given $\epsilon, \delta$. Statistical dependency can exist among the MIAs depending on how they are designed. Dependency can occur, for example, when a group of MIAs have distinct $D$s but the \emph{same} $z$ point, or they have a common challenge base $(D, z)$ but differ in their decision rules. Dependent MIAs can also be incorporated into the statistical model. Appendix \ref{appndx: Modeling Dependent MIAs} contains a modeling approach for dependent MIAs.
\end{remark}

\subsubsection{Priors for privacy and attack strengths} 
We assume a fixed $\delta$ and estimate $\epsilon$ (and $s$). For the priors of $\epsilon$ and $s$, we consider a one-sided normal distribution $\epsilon \sim \mathcal{N}_{[0, \infty)}(0, \sigma_{\epsilon}^{2})$ and $s \sim \text{Beta}(a, b)$ independently, with $\sigma_{\epsilon}^{2} > 0$ and $a, b > 0$.

\subsubsection{Joint probability distribution} 
 Finally, the overall joint probability distribution  of $\epsilon$, $s$, $\alpha_{1:n}$, $\beta_{1:n}$, $X_{1:n}$, $Y_{1:n}$ can be written as
\begin{equation} \label{eq: joint probability model}
p_{\delta}(\epsilon, s, \alpha_{1:n}, \beta_{1:n}, X_{1:n}, Y_{1:n}) = p(\epsilon)  p(s) \prod_{i = 1}^{n} p_{\delta}(\alpha_{i}, \beta_{i} | \epsilon, s) g(X_{i}, Y_{i} | \alpha_{i}, \beta_{i}).
\end{equation}
Figure \ref{fig: privacy error regions} shows the hierarchical structure according to \eqref{eq: joint probability model}  (bottom left) and an example realization of the variables in the model (bottom right).

\subsection{Estimating privacy via MCMC} \label{sec: Estimating privacy via MCMC}

\begin{algorithm}[t]
\caption{\texttt{MCMC-DP-Est}: posterior sampling for $(\epsilon, s)$}
\label{alg: MCMC-DP-Est}
\KwIn{$M$: Number of MCMC iterations, $(\epsilon^{(0)}, s^{(0)})$: Initial values; $X_{1:n}, N_{0, 1:n}, Y_{1:n}, N_{1, 1:n}$: FP and FN counts for each challenge base and numbers of challenges under $H_{0}, H_{1}$ for each challenge base; $\sigma^{2}_{q, \epsilon} \sigma^{2}_{s, \epsilon}$: Proposal variances for $\log \epsilon$ and $s$; $K$: Number of auxiliary variables in one iteration of MCMC.}
\KwOut{Samples $\epsilon^{(i)}, s^{(i)}$, $i = 1, \ldots, M$ from $p_{\delta}(\epsilon, s, \alpha_{1:n}, \beta_{1:n} | X_{1:n}, Y_{1:n})$}

\For{$i = 1:M$}{
Draw the proposal $\epsilon' \sim \log \mathcal{N}(\log \epsilon, \sigma_{q, \epsilon}^{2})$ and $s' \sim \mathcal{N}(s, \sigma_{q,s}^{2})$.

\For{$j = 1:n$}{
Set $(\alpha_{j}^{(1)}, \beta_{j}^{(1)}) = (\alpha_{j}, \beta_{j})$.

Sample $\alpha_{j}^{(k)}, \beta_{j}^{(k)} \overset{\text{iid}}{\sim} \text{Unif}(0, 1)$ for $k = 2, \ldots, K$.

Calculate the weights
\begin{align*}
w_{j}^{(k)} &= p_{\delta}(\alpha_{j}^{(k)}, \beta_{j}^{(k)} | \epsilon, s) g(X_{j}, Y_{j} | \alpha_{j}^{(k)}, \beta_{j}^{(k)}), \quad k = 1, \ldots, K\\
w_{j}^{\prime(k)} &=  p_{\delta}(\alpha_{j}^{(k)}, \beta_{j}^{(k)} | \epsilon', s') g(X_{j}, Y_{j} |  \alpha_{j}^{(k)}, \beta_{j}^{(k)}) \quad k = 1, \ldots, K
\end{align*}
}

Acceptance probability: $A = \min \left\{1, \frac{p(s') p(\epsilon') \epsilon'}{p(s) p(\epsilon) \epsilon}\prod_{j = 1}^{n}\frac{ \sum_{k = 1}^{K} w_{j}^{\prime(k)}}{\sum_{k = 1}^{K} w_{j}^{(k)}} \right\}$

\textbf{Accept/Reject}: Draw $u \sim \text{Unif}(0, 1)$.

\uIf{$u \leq A$}{
Set $\epsilon = \epsilon', s = s'$, and $\bar{w}_{1:n}^{(1:K)} = w_{1:n}^{\prime (1:K)}$.
}\Else{
Keep $\epsilon, s$ and set $\bar{w}_{1:n}^{(1:K)} = w_{1:n}^{(1:K)}$.
}
\For{$j = 1, \ldots, n$}{
Sample $k \in \{1, \ldots, K\}$ w.p.\ $\propto \bar{w}_{j}^{(k)}$ and set $(\alpha_{j}, \beta_{j}) = (\alpha_{j}^{(k)}, \beta_{j}^{(k)})$.
}
Store $\epsilon^{(i)} = \epsilon, s^{(i)} = s$.
}

\end{algorithm}

Alg.\ \ref{alg: MCMC-DP-Est} shows the MCMC method for the \emph{joint posterior distribution}
\begin{equation} \label{eq: joint posterior of epsilon and s}
p_{\delta}(\epsilon, s, \alpha_{1:n}, \beta_{1:n} | X_{1:n}, Y_{1:n}) \propto p_{\delta}(\epsilon, s, \alpha_{1:n}, \beta_{1:n}, X_{1:n}, Y_{1:n}).
\end{equation}
We call this method \texttt{MCMC-DP-Est}. Although \texttt{MCMC-DP-Est} outputs iterates for $(\epsilon, s, \alpha_{1:n}, \beta_{1:n})$, the $\epsilon$ (or $(\epsilon, s)$)-component of the samples can be used to estimate the marginal posterior of $\epsilon$ (or $(\epsilon, s)$). \texttt{MCMC-DP-Est} is a variant of the MHAAR (Metropolis-Hastings with averaged acceptance ratios) methodology in \citet{Andrieu_et_al_2020} developed for latent variable models. (Here the latent variables are the $(\alpha_{1:n}, \beta_{1:n})$.) \texttt{MCMC-DP-Est} has $\mathcal{O}(K n)$ complexity per iteration. We state the correctness of \texttt{MCMC-DP-Est} in the following proposition. A proof is given in Appendix \ref{appndx: Proof of correctness of} and contains a strong allusion to \citet{Andrieu_et_al_2020}. 

\begin{prop} \label{prop: correctness}
For any $K > 1$, $\sigma^{2}_{q, \epsilon}$, and $\sigma^{2}_{q, s}$, \texttt{MCMC-DP-Est} in Alg.\ \ref{alg: MCMC-DP-Est} targets exactly the posterior distribution in \eqref{eq: joint posterior of epsilon and s}, in the sense that it simulates an ergodic Markov Chain whose invariant distribution is \eqref{eq: joint posterior of epsilon and s}.
\end{prop}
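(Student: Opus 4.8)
The plan is to exhibit \texttt{MCMC-DP-Est} as an instance of the MHAAR kernel of \citet{Andrieu_et_al_2020} on an extended state space, and then to combine an invariance argument with an ergodicity argument. First I would enlarge the state by the $K$ auxiliary pairs $(\alpha_{j}^{(1:K)}, \beta_{j}^{(1:K)})$ for each challenge base $j$, designating the first pair $(\alpha_{j}^{(1)}, \beta_{j}^{(1)})$ as the ``active'' one carried between iterations as $(\alpha_{j}, \beta_{j})$. Writing $\gamma_{j}(a, b; \epsilon, s) := p_{\delta}(a, b \mid \epsilon, s)\, g(X_{j}, Y_{j} \mid a, b)$ and letting $\mu$ be the $\text{Unif}([0,1]^{2})$ density (identically $1$, which is why Alg.~\ref{alg: MCMC-DP-Est} uses $w_{j}^{(k)} = \gamma_{j}(\alpha_{j}^{(k)}, \beta_{j}^{(k)}; \epsilon, s)$ rather than $\gamma_{j}/\mu$), define the extended target
\[
\bar{p}_{\delta}\bigl(\epsilon, s, \{\alpha_{j}^{(1:K)}, \beta_{j}^{(1:K)}\}_{j=1}^{n}\bigr) \;\propto\; p(\epsilon)\,p(s) \prod_{j=1}^{n} \Bigl[\gamma_{j}\bigl(\alpha_{j}^{(1)}, \beta_{j}^{(1)}; \epsilon, s\bigr) \prod_{k=2}^{K} \mu\bigl(\alpha_{j}^{(k)}, \beta_{j}^{(k)}\bigr)\Bigr].
\]
Integrating out $(\alpha_{j}^{(2:K)}, \beta_{j}^{(2:K)})$ for all $j$ returns exactly the posterior \eqref{eq: joint posterior of epsilon and s} with $(\alpha_{j}, \beta_{j})$ identified with the active pair, so it suffices to show one iteration of Alg.~\ref{alg: MCMC-DP-Est} is an ergodic kernel leaving $\bar{p}_{\delta}$ invariant; marginalization then gives the claim. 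Keeping only $(\epsilon, s, \alpha_{1:n}, \beta_{1:n})$ between iterations is legitimate: the pairs $(\alpha_{j}^{(2:K)}, \beta_{j}^{(2:K)})$ are regenerated at the top of each iteration, and their full conditional under $\bar{p}_{\delta}$ is i.i.d.\ $\text{Unif}([0,1]^{2})$, so this regeneration is a $\bar{p}_{\delta}$-invariant Gibbs step.

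For invariance I would decompose the iteration. After the Gibbs regeneration above, the proposal of $(\epsilon', s')$ together with the averaged acceptance ratio $A$ and the resampling of the active index is the MHAAR move of \citet{Andrieu_et_al_2020}: it is equivalent to a genuine Metropolis--Hastings step on the extended space that, for each $j$, proposes both $(\epsilon', s') \sim q$ and a new active position $\kappa_{j}$ with probability $\propto w_{j}^{\prime(\kappa_{j})}$ (swapping positions $1$ and $\kappa_{j}$), the reverse move proposing position $1$ with probability $\propto w_{j}^{(1)}$. A detailed-balance computation then shows the per-block likelihood ratio $\gamma_{j}(\cdot; \epsilon', s')/\gamma_{j}(\cdot; \epsilon, s)$ cancels against the index-proposal ratio and leaves the factor $\bigl(\sum_{k} w_{j}^{\prime(k)}\bigr)/\bigl(\sum_{k} w_{j}^{(k)}\bigr)$, while the $(\epsilon, s)$-proposal contributes $q(\epsilon \mid \epsilon')/q(\epsilon' \mid \epsilon) = \epsilon'/\epsilon$ (log-normal for $\epsilon$; symmetric Gaussian for $s$ contributes $1$) and the prior contributes $p(\epsilon')p(s')/[p(\epsilon)p(s)]$, reproducing $A$ exactly. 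On the reject branch Alg.~\ref{alg: MCMC-DP-Est} additionally resamples the active index with the current weights $w_{j}^{(k)} \propto \gamma_{j}(\cdot; \epsilon, s)$; this is $\bar{p}_{\delta}$-invariant too (it only permutes particle labels within a block, and the rejection probability is a symmetric function of the block ensemble), so the composition ``MHAAR move then resample-on-reject'' is $\bar{p}_{\delta}$-invariant. Composing with the regeneration step yields a $\bar{p}_{\delta}$-invariant transition.

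The step I expect to be the main obstacle is precisely this averaged-acceptance-ratio identity and its associated reversibility: one must verify that marginalizing the per-block index variables out of the extended-space Metropolis--Hastings move reproduces the algorithm's acceptance probability $A$, and that the resulting state-dependent mixture of moves still satisfies detailed balance with respect to $\bar{p}_{\delta}$. This is exactly the content of the MHAAR construction in \citet{Andrieu_et_al_2020}, which I would instantiate --- checking that our choice of $\mu$, the per-block product structure, and the proposal kernels fit their template --- rather than re-derive from scratch. A minor companion point, needed so that the incoming $(\alpha_{j}, \beta_{j})$ may play the role of the first auxiliary pair with the fixed label $1$ rather than a uniformly random label, is the exchangeability of $(\alpha_{j}^{(1:K)}, \beta_{j}^{(1:K)})$ under $\bar{p}_{\delta}$.

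Finally, for ergodicity I would establish $\phi$-irreducibility and aperiodicity on the support of \eqref{eq: joint posterior of epsilon and s}. The proposals $\epsilon' \sim \log\mathcal{N}(\log \epsilon, \sigma_{q,\epsilon}^{2})$ and $s' \sim \mathcal{N}(s, \sigma_{q,s}^{2})$ have strictly positive densities on $(0, \infty)$ and on $(0,1)$ respectively (proposals outside the prior supports are rejected), and the posterior density is strictly positive on its support --- for $\epsilon > 0$ and $s \in (0,1)$ the region $\mathcal{R}_{s}(\epsilon, \delta)$ has positive area and the binomial factors are positive on $(0,1)^{2}$ --- so the $(\epsilon, s)$-chain reaches every neighborhood with positive probability; and since $K > 1$, at least one fresh $\text{Unif}([0,1]^{2})$ draw competes in the resampling weights each iteration, so each $(\alpha_{j}, \beta_{j})$ can move to any neighborhood inside $\mathcal{R}_{s}(\epsilon, \delta)$ with positive probability. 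This is the only place $K > 1$ is used: with $K = 1$ the pairs $(\alpha_{1:n}, \beta_{1:n})$ would never change and the chain would not be irreducible. Aperiodicity holds because the whole state is unchanged with positive probability (a rejected $(\epsilon', s')$-proposal together with every active index resampling to position $1$). By standard Markov chain Monte Carlo theory, a $\phi$-irreducible, aperiodic chain with invariant distribution $\bar{p}_{\delta}$ is ergodic with $\bar{p}_{\delta}$ as its unique stationary law; marginalizing to $(\epsilon, s, \alpha_{1:n}, \beta_{1:n})$ gives the stated ergodicity with stationary distribution \eqref{eq: joint posterior of epsilon and s}.
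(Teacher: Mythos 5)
Your proposal is correct and takes essentially the same route as the paper: both cast \texttt{MCMC-DP-Est} as an instance of the MHAAR Algorithm~3 of \citet{Andrieu_et_al_2020}, with $\theta=(\epsilon,s)$, per-block targets $\gamma_{j}(\alpha_{j},\beta_{j};\epsilon,s)=p_{\delta}(\alpha_{j},\beta_{j}\mid\epsilon,s)\,g(X_{j},Y_{j}\mid\alpha_{j},\beta_{j})$, and uniform auxiliary proposals on $[0,1]^{2}$, and then appeal to that framework (Theorem~3 there) for invariance and ergodicity. The only difference is one of detail: you sketch the extended-target construction, the detailed-balance cancellation behind the averaged acceptance ratio, and the irreducibility/aperiodicity argument (including why $K>1$ is needed) explicitly, whereas the paper obtains all of this wholesale from the cited theorem after observing that the symmetric uniform proposal collapses the two MHAAR move types into the single move of Alg.~\ref{alg: MCMC-DP-Est}.
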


\section{The MIA attack and measuring its performance} \label{sec: The MIA attack and measuring its performance}
In this section, we describe the MIA used in our experiments and equip it with an experimental design to measure its performance computationally efficiently. 

\subsection{The MIA design} \label{sec: Designing the attack}

\begin{algorithm}
\caption{$b = \texttt{MIA}(\theta, D, z, \mathcal{A}, M_{0}, M_{1}, \alpha^{\ast}) $}
\label{alg: Attack}

\For{$j = 1, \ldots, M_{0}$}{
Obtain $\theta_{0}^{(j)} \sim \mathcal{A}(D)$, calculate $\ell_{0}^{(j)} = L(z, \theta_{0}^{(j)})$
}
\For{$j = 1, \ldots, M_{1}$}{
Obtain $\theta_{1}^{(j)} \sim \mathcal{A}(D \cup \{z\})$, calculate $\ell_{1}^{(j)} = L(z, \theta_{1}^{(j)})$.

}

\Return $b = \texttt{LearnAndDecide}(D, z, \theta, \alpha^{\ast}, \{ \ell_{0}^{(j)} \}_{j= 1}^{M_{0}}, \{ \ell_{1}^{(j)} \}_{j= 1}^{M_{1}} )$
\end{algorithm}

The test statistic of LRT, the most powerful test, is the ratio of likelihoods $p_{\mathcal{A}}(\theta | D)/p_{\mathcal{A}}(\theta | D \cup \{ z \})$. However, the likelihoods are usually intractable due to $\mathcal{A}$'s complex structure; therefore, approximations are sought. Loss-based attacks are a common way of approximating the LRT \citep{Yeom_et_al_2018, Sablayrolles_et_al_2019, Ye_et_al_2022, Carlini_et_al_2022}. In particular, we consider a parametric version LiRA \citep{Carlini_et_al_2022}, a loss-based attack that uses the ratio $p_{L}(\ell^{\ast} | H_{0})/p_{L}(\ell^{\ast} | H_{1})$ evaluated at $\ell^{\ast} = L(z, \theta)$. The outline of a loss-based MIA is given in Alg.\ \ref{alg: Attack}. The densities $p_{L}(\cdot | H_{i})$ can be approximated via $M_{i} > 1$ shadow models $\theta_{i}^{(1)}, \ldots, \theta_{i}^{(N)}$ generated under $H_{i}$ and fitting a distribution $p_{L}(\cdot | H_{i})$ to the losses $L(z, \theta_{i}^{(1)}), \ldots L(z, \theta_{i}^{(M_{i})})$. Finally, the critical region to choose $H_{1}$ is set $\{ p_{L}(\ell | H_{0})/p_{L}(\ell | H_{1}) < \tau \}$ and $\tau$ is adjusted to have a desired target type I error probability $\alpha^{\ast}$.

\paragraph{Learning $H_{0}$ and $H_{1}$ and deciding:} Alg.\ \ref{alg: Learn and Decide} describes how we learn the distributions under both hypotheses and apply a decision. Firstly, for each $i = 0, 1$ we fit a normal distribution $\mathcal{N}(\mu_{i}, \sigma_{i}^{2})$ using the sample $\ell_{i}^{(1:M_{i})}$, where $\ell_{i}^{(j)} = L(z, \theta_{0}^{(j)})$. Then, LRT is applied to decide between $H_{0}: \ell \sim \mathcal{N}(\mu_{0}, \sigma_{0}^{2})$ and $H_{1}:\ell \sim \mathcal{N}(\mu_{1}, \sigma_{1}^{2})$ with a target type I error probability of $\alpha^{\ast}$. A derivation of the LRT is given in Appendix~\ref{appndx: Most powerful test for comparing two normal distributions with a single observation}.

\begin{algorithm}[H]
\caption{$\texttt{LearnAndDecide}(D, z, \theta, \alpha^{\ast},  \{ \ell_{0}^{(j)} \}_{j= 1}^{M_{0}}, \{ \ell_{1}^{(j)} \}_{j= 1}^{M_{1}})$}
\label{alg: Learn and Decide}

\For(\tcp*[f]{\textbf{Learn $H_{0}$ and $H_{1}$}}){$i = 0, 1$}{
Fit normal distributions for $H_{i}$ as

$\mu_{i} = \frac{1}{M_{i}} \sum_{j = 1}^{M_{i}} \ell_{i}^{(j)}, \quad \sigma_{i}^{2} = \frac{1}{M_{i}-1} \sum_{j = 1}^{M_{i}} (\ell_{i}^{(j)} - \mu_{i})^{2}$
}

Calculate $\ell^{\ast} = L(z, \theta)$.\tcp*[f]{\textbf{Compute  $\ell^{\ast}$, $R$ and the decision}}

Calculate $R = \frac{\mu_{0}/\sigma_{0}^{2} - \mu_{1}/\sigma_{1}^{2}}{1/\sigma_{0}^{2} - 1/\sigma_{1}^{2}}$ and $\delta = \frac{\mu_{0}}{\sigma_{0}} + \frac{1}{\sigma_{0}} R$ 

\Return Decision
\[
b = \begin{cases} 1 & \text{ if } \left(\ell^{\ast} + R \right)^{2} \leq \sigma_{0}^{2} F_{1, \delta^{2}}^{-1}(\alpha^{\ast}) \text{ and } \sigma_{0}^{2} > \sigma_{1}^{2}, \\ 1 & 
\text{ if } \left(\ell^{\ast} + R \right)^{2} \geq \sigma_{0}^{2} F_{1, \delta^{2}}^{-1}(1 - \alpha^{\ast}) \text{ and }\sigma_{0}^{2} < \sigma_{1}^{2}, \\ 
 0 & \text{otherwise}.\end{cases}
\]
where $F^{-1}_{d, \delta^{2}}(u)$ is the inverse cdf of $\chi^{2}_{d, \delta^{2}}$, the non-central $\chi^{2}$ dist.\ with noncentrality parameter $\delta^{2}$ and degrees of freedom $d$, evaluated at $u$.
\end{algorithm}

\subsection{Measuring the performance of the MIA} \label{sec: Measuring the performance of the MIA}
When the primary goal of using MIA is to audit the privacy of an algorithm, one needs to perform the attack multiple times to estimate its type I and type II error probabilities. A direct way to do this is Alg.\ \ref{alg: data generation}, where the MIA is simply run $N_{0}$ and $N_{1}$ times, each with an independent output $\theta$ and independent sets of $M_{0}, M_{1}$ shadow models for $H_{0}, H_{1}$. The cost of this procedure is proportional to $(N_{0} + N_{1}) (M_{0} + M_{1})$, which can be prohibitive. 

\begin{algorithm}
\caption{$\texttt{MeasureMIA}(\mathcal{A}, D, z, N_{0}, N_{1}, M_{0}, M_{1})$}
\label{alg: data generation}

Set $D_{0} = D \backslash \{ z \}$ and $D_{1} = D \cup \{ z \}$.

\For{$i = 0, 1$}{
\For{$j = 1, \ldots, N_{i}$}{
Train  $D_{i}$ and output $\theta \sim \mathcal{A}(D_{i})$.

Decide according to $\hat{d}_{i}^{(j)} = \texttt{MIA}(\theta,  D, z, \mathcal{A}, M_{0}, M_{1})$. 
}
}

\Return{$X = \sum_{j = 1}^{N_{0}} d_{0}^{(j)}$, $Y = \sum_{j = 1}^{N_{1}} 1 - d_{1}^{(j)}$}
\end{algorithm}

We present a cheaper alternative to Alg.\ \ref{alg: data generation}, in which $N$ models are trained from $H_{0}$ and $H_{1}$ and \emph{cross-feed} each other as shadow models. For each $i = 0, 1$ and $j =1, \ldots, N$, the triple $(D, z, \theta_{i}^{(j)})$ is taken as the input of MIA and the rest $\{ \theta_{i}^{(1:N)-j}, \theta_{1-i}^{(1:N)} \}$ are used as the shadow models. This is presented in Alg. \ref{alg: Measure MIA - fast}.

Although the decisions obtained with Alg. \ref{alg: data generation} are independent (given the true $\alpha, \beta$ of the MIA), those obtained with Alg. \ref{alg: Measure MIA - fast} are \emph{not} independent; they are correlated due to using the same set of shadow models. As a result, the Binomial distributions \eqref{eq: conditional distributions for independent tests} no longer hold for $X_{i}, Y_{i}$ pairs obtained from Alg. \ref{alg: Measure MIA - fast}. One can incorporate that into the joint distribution by taking the conditional distributions of $X_{i}$, $Y_{i}$ given $\alpha, \beta$ as correlated Binomial distributions \citep{Kupper_and_Haseman_1978}. An alternative, which is pursued here, is to use a \emph{bivariate normal approximation} for $(X_{i}, Y_{i})$ as
\begin{equation} \label{eq: bivariate distribution}
\mathcal{N}\left( \begin{bmatrix} N \alpha_{i} \\ N \beta_{i} \end{bmatrix}, \begin{bmatrix} \alpha_{i} (1 - \alpha_{i}) (N+N(N-1)\tau) & N^{2} \rho \sqrt{\alpha_{i}(1 - \alpha_{i}) \beta_{i}(1 - \beta_{i})} \\ N^{2} \rho \sqrt{\alpha_{i}(1 - \alpha_{i}) \beta_{i}(1 - \beta_{i})} &  \beta_{i} (1 - \beta_{i}) (N+N(N-1)\tau) \end{bmatrix} \right).
\end{equation}
The parameters $\tau, \rho$ can be estimated jointly with $\epsilon, s$ by slightly modifying \texttt{MCMC-DP-Est}. The details of this extension are given in Appendix \ref{appndx: Extension of the joint probability model for correlated error counts}.

\begin{algorithm}[H]
\caption{$\texttt{MeasureMIAFast}(\mathcal{A}, D, z, N, \alpha^{\ast})$}
\label{alg: Measure MIA - fast}

Set $D_{0} = D$ and $D_{1} = D \cup \{ z \}$.

\For(\tcp*[f]{\textbf{$N$ correlated attacks for $D_{0}$ vs $D_{1}$}}){$i = 0, 1$}{
\For{$j = 1, \ldots, N$}{
Obtain $\theta_{i}^{(j)} \sim \mathcal{A}(D_{i})$ and  calculate the loss $\ell_{i}^{(j)} = L(z, \theta_{i}^{(j)})$ 
}
}

\For(\tcp*[f]{\textbf{Decisions}}){$j = 1, \ldots, N$}{
\begin{align*}
d_{0}^{(j)} &= \texttt{LearnAndDecide}(D, z, \theta_{0}^{(j)}, \alpha^{\ast}, \{ \ell_{0}^{(i)} \}_{i= 1, i \neq j}^{N}, \{ \ell_{1}^{(i)} \}_{i = 1}^{N}) \\
d_{1}^{(j)} &= \texttt{LearnAndDecide}(D, z, \theta_{1}^{(j)}, \alpha^{\ast}, \{ \ell_{0}^{(i)} \}_{i= 1}^{N}, \{ \ell_{1}^{(i)} \}_{i = 1, i \neq j}^{N})
\end{align*}
}

\Return{$X = \sum_{j = 1}^{N} d_{0}^{(j)}$, $Y = \sum_{j = 1}^{N} 1 - d_{1}^{(j)}$}
\end{algorithm}


\section{Experiments} \label{sec: Experiments}
The code to replicate all the experiments in the section can be downloaded at \url{https://github.com/cerenyildirim/MCMC_for_Bayesian_estimation}.
\subsection{Privacy estimation with artificial test performance results} \label{sec: Privacy Estimation with artificial test performance results}

\subsubsection{Role of $s$ in privacy estimation}
This experiment is designed to show the effect of the prior specification for the attack strength. For simplicity, we took $n = 1$ and focused on $N_{0,1} = N_{1,1} = N > 1$ instances of a single attack. Also, we set $X = 0.4\times N$ and $Y = 0.4 \times N$ to imitate an attack with $\alpha = \beta = 0.4$. We ran \texttt{MCMC-DP-Est} in Alg.\ \ref{alg: MCMC-DP-Est} with varying values of $s$ that are seen on the $x$-axis of the left plot in Figure \ref{fig: CI Example}. The conditional distribution $g(X_{i}, Y_{i} | \alpha_{i}, \beta_{i})$ is set to \eqref{eq: conditional distributions for independent tests}. The $90\%$ credible interval (CI) for $\epsilon$ for each run (different $s$) by computing the $5\%$- and $95\%$- empirical quartiles obtained from the last $10^{6}$ samples of the MCMC algorithm (discarding the first $10^{5}$ samples). A dramatic change is visible in the CI width as a function of $s$. CIs as narrow as those reported in \citet{Zanella-Beguelin_et_al_2023} with the same observations are obtained when $s \geq 0.9$. However, CIs are significantly wider for smaller (and arguably more realistic) values of $s$. Those results indicate the critical role of $s$, hence the importance of its estimation when it is unknown.

\begin{figure}
\centerline{
\includegraphics[scale = 0.33]{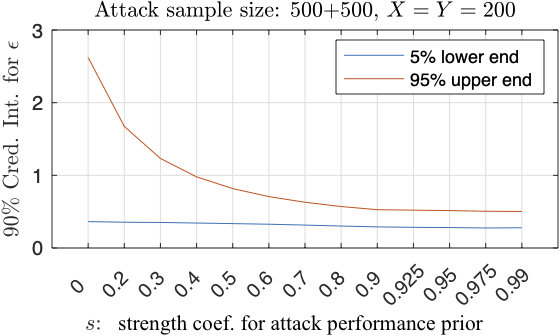}
\includegraphics[scale = 0.7]{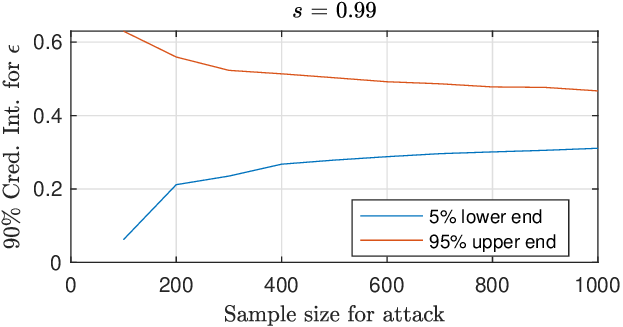}
}
\caption{{\bf Left:} $90\%$ CI for $\epsilon$ vs $s$. {\bf Right:} $90 \%$ CI for $\epsilon$ vs $N$.}
\label{fig: CI Example}
\end{figure}

\subsubsection{Estimating $\epsilon$ and $s$}
Here we show how \texttt{MCMC-DP-Est} estimates $\epsilon$ and $s$ jointly from multiple attack results in different scenarios. We took $n = 10$ and $N_{0,i} = N_{1,i} = 1000$ for all $i = 1, \ldots, n$. We considered two scenarios. 
\begin{itemize}
\item In the first scenario, we made the test strengths evenly spread over $\mathcal{R}(\epsilon, \delta)$ by generating $\alpha_{i}, \beta_{i} \iid \text{Beta}(10, 10)$, for $i = 1, \ldots, n$. The counts $X_{i}, Y_{i}$ were drawn as $X_{i} \iid \text{Binom}(N_{0,1}, \alpha_{i})$, $Y_{i} \iid \text{Binom}(N_{0,1}, \beta_{i})$, independently. 

\item In the second, we assumed relatively accurate attacks as \\
$
\begin{tabular}{c | c c c c c c c c c c}
$X_{1:10}$ & 40 & 50 & 60 & 100 & 100 & 110 & 120 & 200 & 200 & 200 \\
\hline
$Y_{1:10}$ & 250 & 200 & 150 & 100 & 120 & 100 & 100 & 80 & 70 & 60\\
\end{tabular}
$.
\noindent 
\end{itemize}
The observed error rates $(X_{i}/N_{0,i}, Y_{i}/N_{1,i})$ are shown on the left-most plot in Fig.~\ref{fig: histograms example}. Alg.~\ref{alg: MCMC-DP-Est} was run to obtain $10^{6}$ samples from $p(\epsilon, s | X_{1:n}, Y_{1:n})$. As previously, $g(X_{i}, Y_{i} | \alpha_{i}, \beta_{i})$ is set to \eqref{eq: conditional distributions for independent tests}. The results in Fig.~\ref{fig: histograms example} indicate that our method can accurately estimate the attack strengths and $\epsilon$ together.

\begin{figure}[!h]
\centerline{
\includegraphics[scale = 0.9]{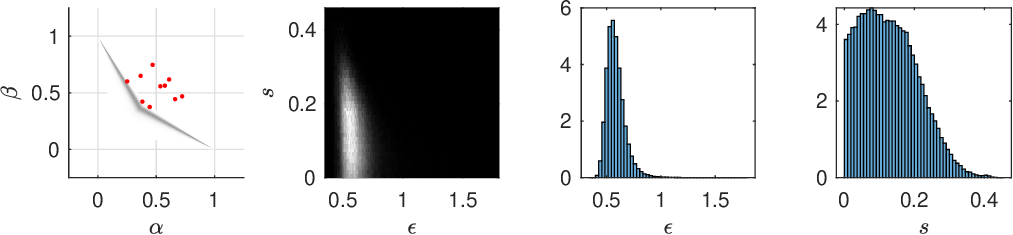}}
\centerline{\includegraphics[scale = 0.9]{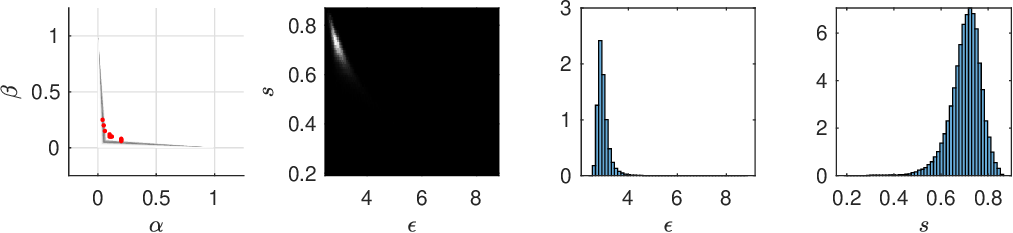}}
\caption{{\small{Posterior distributions for $\epsilon, s$ from multiple attacks. {\bf Top:} Weak attacks. {\bf Bottom:} Strong attacks. The gray area in left-most plots are ``histograms'' of $\epsilon$ for the test according to the posterior distribution of $\epsilon$ (the symmetric counterpart is omitted).}}}
\label{fig: histograms example}
\end{figure}

\subsection{Experiments with real data} \label{sec: Experiments with real data}

We considered the MNIST dataset as the population, which contains 60,000 training examples~\citep{deng2012mnist}. Each example in the set contains a $28 \times 28$-pixel image and an associated categorical label in $\{0, \ldots, 9\}$. In the experiments, we construct $D$ with a size of $999$ (to avoid too small batches while training $D \cup \{z\}$). We generated $n = 20$ challenge bases, and for each challenge base, we generated $N = 100$ challenges. Those challenges are also used as shadow models in a cross-feeding fashion, as described in Section~\ref{sec: Measuring the performance of the MIA}.

\paragraph{Training algorithms and attacks:} For $\mathcal{A}$, we considered a fully connected neural network with one hidden layer having 128 nodes and ReLU as its activation function. Meanwhile, the activation function of the output layer is softmax. We set the loss function $L(x, \theta)$ as categorical cross-entropy. To train the models, we use Keras and TensorFlow libraries~\citep{tensorflow2015-whitepaper, chollet2015keras}. For the optimizer, we use SGD with the momentum parameter $0.9$ and learning rate $0.01$.

We consider black-box auditing of four choices for $\mathcal{A}$ to audit their privacy. The algorithms differ based on the initialization and output perturbation: ($\mathcal{A}_{1}$): fixed initial, no output perturbation; ($\mathcal{A}_{2}$): random initialization, no output perturbation; ($\mathcal{A}_{3}$): fixed initial, output perturbation. ($\mathcal{A}_{4}$): random initialization, output perturbation. Output perturbation is performed by adding i.i.d.\ noise from $\mathcal{N}(0, \sigma^{2})$ to components of the trained model and releasing the noisy model. All algorithms are run for 200 epochs with a minibatch size of 100. 

\paragraph{Attack performances and privacy estimation:}
The attack performance of the MIA in Section \ref{sec: The MIA attack and measuring its performance} on the outputs of $\mathcal{A}_{1:4}$ is shown in Figure~\ref{fig: scatterplots_x_y}. The error counts are obtained with the procedure in Alg.\ \ref{alg: Measure MIA - fast} run for each algorithm. The algorithms with output perturbation used $\sigma = 0.1$. In each plot, each dot is a value $(X_{i}(\alpha^{\ast}), Y_{i}(\alpha^{\ast}))$, where $\alpha^{\ast}$ is the target type I error for the MIA. For the same challenge base $(D_i, z_i)$, several points $(X_{i}(\alpha^{\ast}), Y_{i}(\alpha^{\ast}))$ are obtained by using the $\alpha^{\ast} \in \{0.01, 0.02, \ldots , 0.99\}$, and those points are joined by a line. 

We observe that the random initialization affects the performance of the attacks visibly when output perturbation is not used ($\mathcal{A}_1$ vs $\mathcal{A}_{2}$). However, the effect of random initialization significantly drops when output perturbation is used. We also see that some challenge bases $(D_{i}, z_{i})$ allow significantly better detection than others. This is expected since the challenge bases are drawn at random. Strategies to craft worst-case scenarios \citep{Nasr_et_al_2021} can be used to eliminate those challenge bases for which the error lines are close to the $x+y =1$ line.

We turn to privacy estimation. We choose a single $(X, Y)$ point for each challenge base to feed \texttt{MCMC-DP-Est} in Alg.\ \ref{alg: MCMC-DP-Est} with $n = 20$ observations; those points are $(X_{i}(0.1), Y_{i}(0.1))$ for each $i = 1, \ldots, 20$. \texttt{MCMC-DP-Est} is run with $K = 1000$ auxiliary variables for $10^{5}$ iterations, and the first $10^{4}$ samples are discarded as burn-in. We used $\epsilon \sim \log \mathcal{N}_{[0, \infty)}(0, 10)$, one-sided normal distribution, and $s \sim \text{Unif}(0, 1) = \text{Beta}(1, 1)$ for the priors of $\epsilon$ and $s$. For $g(X_{i}, Y_{i} | \alpha_{i}, \beta_{i})$, a bivariate normal approximation in \eqref{eq: bivariate distribution} is used to account for the dependency among the decisions produced by Alg.~\ref{alg: Measure MIA - fast}. The two additional parameters $\tau, \rho$ are estimated within \texttt{MCMC-DP-Est}, as described in Appendix \ref{appndx: Extension of the joint probability model for correlated error counts}.

The 2D histograms at the bottom row of Figure~\ref{fig: scatterplots_x_y} are the posterior distributions of $(\epsilon, s)$ constructed from the samples provided by \texttt{MCMC-DP-Est}. The estimates of $(\epsilon, s)$ are as expected (e.g., randomness decreases $\epsilon$) and are consistent with the attack performances. The estimates for $s$ suggest that with more randomness in training, either the loss-based attack loses its power (relative to the best theoretical attack) or some challenge bases are no longer informative.

\begin{figure}[h!]
\centerline{
\begin{tikzpicture}
\node[anchor=south west,inner sep=0] (image) at (0,0) {\includegraphics[scale = 0.55]{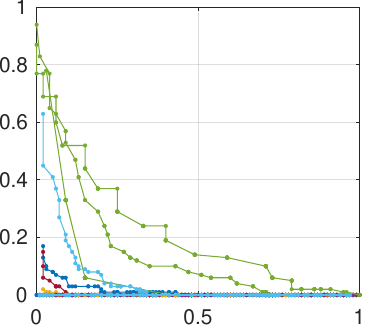} };
\begin{scope}[x={(image.south east)},y={(image.north west)}]
\node[above,scale=0.7] at (0.5,0.97) {$\mathcal{A}_{1}$};
\node[rotate=90,scale=0.7] at (-0.03,0.5) {$Y$};
\node[scale=0.7] at (0.5,-0.05) {$X$};
\end{scope}
\end{tikzpicture}

\begin{tikzpicture}
\node[anchor=south west,inner sep=0] (image) at (0,0) {\includegraphics[scale = 0.55]{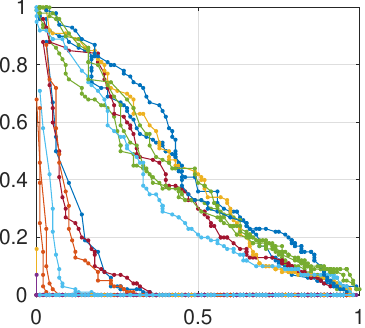} };
\begin{scope}[x={(image.south east)},y={(image.north west)}]
\node[above,scale=0.7] at (0.5,0.97) {$\mathcal{A}_{2}$};
\node[rotate=90,scale=0.7] at (-0.03,0.5) {$Y$};
\node[scale=0.7] at (0.5,-0.05) {$X$};
\end{scope}
\end{tikzpicture}

\begin{tikzpicture}
\node[anchor=south west,inner sep=0] (image) at (0,0) {\includegraphics[scale = 0.55]{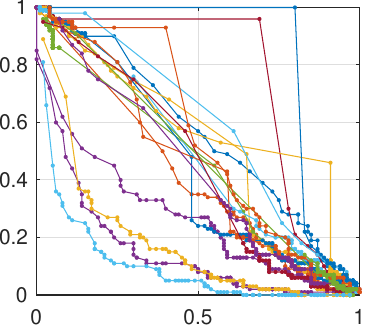} };
\begin{scope}[x={(image.south east)},y={(image.north west)}]
\node[above,scale=0.7] at (0.5,0.97) {$\mathcal{A}_{3}$};
\node[rotate=90,scale=0.7] at (-0.03,0.5) {$Y$};
\node[scale=0.7] at (0.5,-0.05) {$X$};
\end{scope}
\end{tikzpicture}

\begin{tikzpicture}
\node[anchor=south west,inner sep=0] (image) at (0,0) {\includegraphics[scale = 0.55]{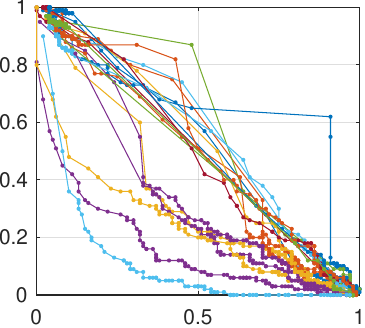} };
\begin{scope}[x={(image.south east)},y={(image.north west)}]
\node[above,scale=0.7] at (0.5,0.97) {$\mathcal{A}_{4}$};
\node[rotate=90,scale=0.7] at (-0.03,0.5) {$Y$};
\node[scale=0.7] at (0.5,-0.05) {$X$};
\end{scope}
\end{tikzpicture}
}

\centerline{
\includegraphics[scale = 0.6]{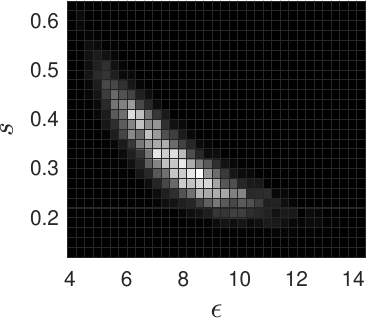} 
\includegraphics[scale = 0.6]{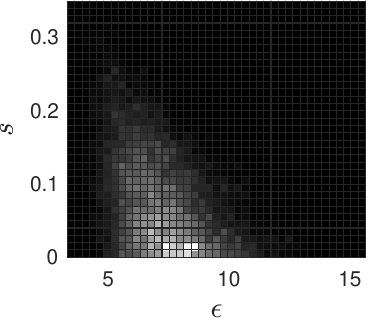} 
\includegraphics[scale = 0.6]{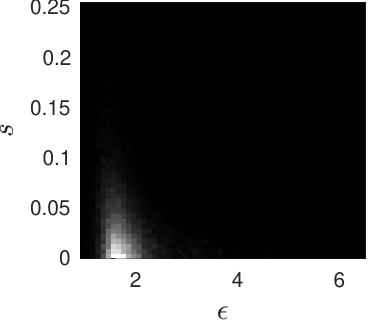} 
\includegraphics[scale = 0.6]{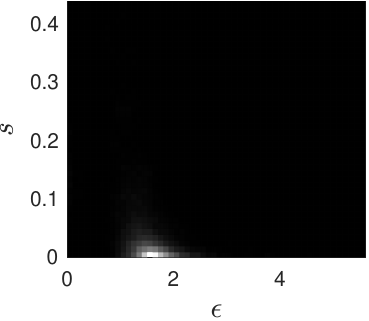} 
}
    \caption{$(X, Y)$ counts for $\mathcal{A}_{1}, \ldots, \mathcal{A}_{4}$. For output perturbation, $\sigma = 0.1$ was used.}
    \label{fig: scatterplots_x_y}
\end{figure}

We repeat the experiments for $\mathcal{A}_{4}$ with $\sigma \in \{0.01, 0.05, 0.1 \}$ for the output perturbation noise. Figure~\ref{fig: scatterplots_sigma} shows that, as expected, increasing noise makes the attacks less accurate, which in turn causes smaller estimates for $\epsilon$.

The estimates of \texttt{MCMC-DP-Est} across the audited algorithms are summarized in Table \ref{tbl: Credible intervals} with $90 \%$ CIs for $\epsilon$. Figure \ref{fig:ACFs} shows the sample autocorrelation functions (ACF) for $\epsilon$-samples of all the runs of \texttt{MCMC-DP-Est}. The fast-decaying ACFs indicate a healthy (fast-mixing) chain. For further diagnosis, we also provide the trace plots of the samples for $\epsilon$, $s$, $\tau$, $\rho$ in Appendix~\ref{appndx: Extension of the joint probability model for correlated error counts}.

\begin{figure}[h!]
    \centerline{
    \begin{tikzpicture}
    \node[anchor=south west,inner sep=0] (image) at (0,0) {\includegraphics[scale = 0.55]{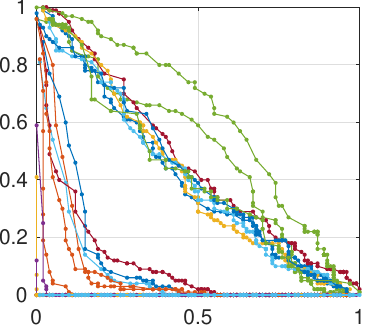} };
    \begin{scope}[x={(image.south east)},y={(image.north west)}]
            \node[above,scale=0.7] at (0.5,0.97) {$\mathcal{A}_{4}$ with $\sigma = 0.01$};
            \node[rotate=90,scale=0.7] at (-0.03,0.5) {$Y$};
            \node[scale=0.7] at (0.5,-0.05) {$X$};
        \end{scope}
    \end{tikzpicture}
    \begin{tikzpicture}
    \node[anchor=south west,inner sep=0] (image) at (0,0) {\includegraphics[scale = 0.55]{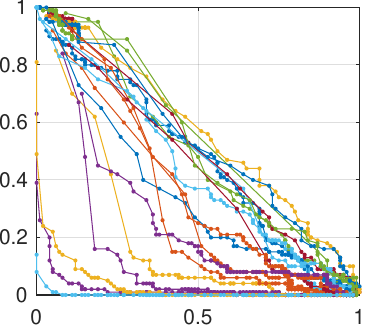} };
    \begin{scope}[x={(image.south east)},y={(image.north west)}]
            \node[above,scale=0.7] at (0.5,0.97) {$\mathcal{A}_{4}$ with $\sigma = 0.05$};
            \node[rotate=90,scale=0.7] at (-0.03,0.5) {$Y$};
            \node[scale=0.7] at (0.5,-0.05) {$X$};
        \end{scope}
    \end{tikzpicture}
    \begin{tikzpicture}
    \node[anchor=south west,inner sep=0] (image) at (0,0) {\includegraphics[scale = 0.55]{Errors_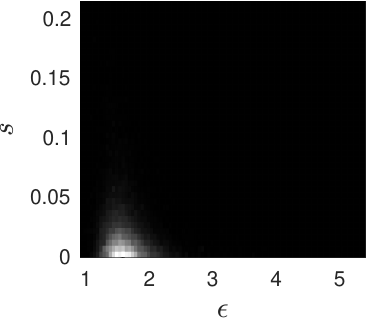} };
    \begin{scope}[x={(image.south east)},y={(image.north west)}]
            \node[above,scale=0.7] at (0.5,0.97) {$\mathcal{A}_{4}$ with $\sigma = 0.1$};
            \node[rotate=90,scale=0.7] at (-0.03,0.5) {$Y$};
            \node[scale=0.7] at (0.5,-0.05) {$X$};
        \end{scope}
    \end{tikzpicture}
        }

    \centerline{
    \includegraphics[scale = 0.6]{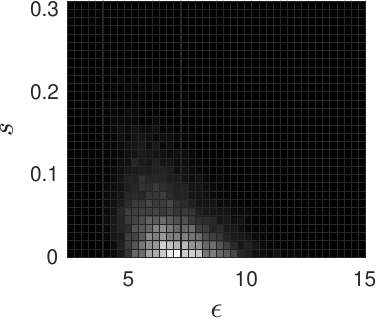} 
    \includegraphics[scale = 0.6]{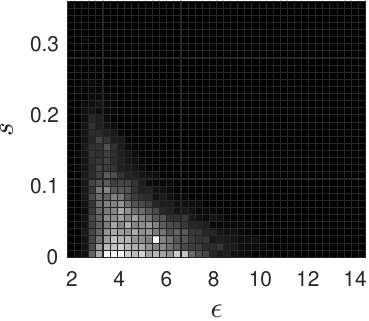} 
    \includegraphics[scale = 0.6]{histogram_dp_attack_w_DP_random_weights_0.1.txt.pdf}
    }
    \caption{Error counts and privacy estimation for \textbf{$\mathcal{A}_{4}$} with $\sigma \in [0.01, 0.05, 0.1]$}
    \label{fig: scatterplots_sigma}
\end{figure}

 \begin{table}[h]
        \centering
        \caption{$90\%$ Credible intervals for $\epsilon$}
        \label{tbl: Credible intervals}
        \begin{tabular}{c | c | c | c | c | c | c}
            \toprule
            & $\mathcal{A}_{1}$ & $\mathcal{A}_{2}$ & $\mathcal{A}_{3}$ & $\mathcal{A}_{4}$  & $\mathcal{A}_{4}$  & $\mathcal{A}_{4}$  \\
            & & & ($\sigma = 0.1$) & ($\sigma = 0.1$) & ($\sigma = 0.05$) & ($\sigma = 0.01$) \\
            \hline
            Lower & 5.52  &  4.95   & 1.29 &    1.00  &  2.80 &    4.61 \\
            Upper & 10.52  &  10.00 &    2.53 &   2.12 &    7.68 &   9.62
            \end{tabular}
\end{table}

\begin{figure}
\centering
\includegraphics[width=0.5\linewidth]{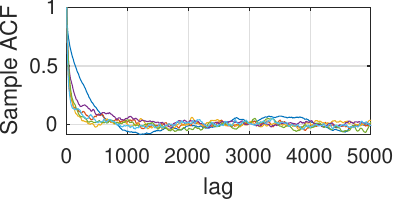} 
\caption{Sample ACFs}
\label{fig:ACFs} 
\end{figure}
    
\section{Conclusion}\label{sec: Discussion}
In this work, we proposed a novel method for Bayesian estimation of differential privacy. Our algorithm leverages multiple $(D, z)$ pairs and attacks to refine the privacy estimates and makes no assumptions about the strength of the attacks to avoid overconfident estimations. Beyond just credible intervals, the method provides the entire posterior distribution of the privacy parameter (as well as the average attack strength). 
Our experiments demonstrated that we can effectively estimate the privacy parameters of models trained under various randomness assumptions and that the resulting estimates align with attack performances. %

We considered the ``inclusion'' versions of DP and MIAs where the pair of datasets differ by the inclusion/exclusion of a single point. The methodology similarly applies to the ``replace'' versions where dataset pairs are $D \cup \{ z \}$ and $D \cup \{z' \}$ for a pair of $z, z'$.

In the real data experiments in Section \ref{sec: Experiments with real data}, a single run of \texttt{MCMC-DP-Est} took $\approx 2.5$ minutes on Matlab on a modern laptop, which is negligibly small compared to the time needed to collect the error counts. This suggests that \texttt{MCMC-DP-Est} can feasibly be used several times to estimate $\epsilon$ at different values of $\delta$.
 
\paragraph{Limitations and future work:} Given a challenge base $(D, z)$, our statistical model considers the error counts for a single target value of type I error (i.e., a single point on each line of a plot in Figure \ref{fig: scatterplots_x_y}). As suggested by \citet{Carlini_et_al_2022}, MIA performance tests would be utilized more effectively by using false negative and false positive counts at all decision thresholds to estimate its \emph{profile function} $f(\alpha)$. Likewise, rather than 
$(\epsilon, \delta)$-DP, the $f$-DP \citep{Dong_et_al_2022} of $\mathcal{A}$ could be estimated as in \citet{Leemann_et_al_2023, Nasr_et_al_2023}, since $f$-DP has more complete information setting a lower bound on the profile functions of MIAs. Both extensions require prior specifications for random functions (e.g.\ \emph{a la} Gaussian process priors), which we consider an important avenue for future work.

\bibliographystyle{apalike}
\bibliography{references}

\appendix
\section{Proof of correctness of \texttt{MCMC-DP-Est}} \label{appndx: Proof of correctness of}
Below, we restate Proposition~\ref{prop: correctness} on the correctness of \texttt{MCMC-DP-Est} for reference.
\begin{propu}
For any $K > 1$, $\sigma^{2}_{q, \epsilon}$, and $\sigma^{2}_{q, s}$, \texttt{MCMC-DP-Est} in Alg.\ \ref{alg: MCMC-DP-Est} targets exactly the posterior distribution
\begin{equation} \label{eq: prop_correctness}
p_{\delta}(\epsilon, s, \alpha_{1:n}, \beta_{1:n} | X_{1:n}, Y_{1:n}) = p(\epsilon) p(s) \prod_{i =1}^{n} p_{\delta}(\alpha_{i}, \beta_{i} | \epsilon, s) g(X_{i}, Y_{i} | \alpha_{i}, \beta_{i})
\end{equation}
in the sense that it simulates an ergodic Markov Chain whose invariant distribution is the posterior distribution above.
\end{propu}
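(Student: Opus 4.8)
The plan is to exhibit \texttt{MCMC-DP-Est} as an instance of the MHAAR (Metropolis–Hastings with averaged acceptance ratios) scheme of \citet{Andrieu_et_al_2020} and then invoke their validity result, so the bulk of the work is bookkeeping: identifying the extended target, the auxiliary randomness, and checking that the written acceptance ratio equals the MHAAR ratio for that construction. First I would introduce the extended state space. For each $j$, the latent pair $(\alpha_j,\beta_j)$ is augmented with $K-1$ i.i.d.\ $\mathrm{Unif}(0,1)^2$ proposals together with an index $c_j\in\{1,\dots,K\}$ selecting the ``active'' copy; the extended target is
\begin{equation*}
\bar p_\delta\bigl(\epsilon,s,\{(\alpha_j^{(k)},\beta_j^{(k)})\}_{j,k}, c_{1:n}\mid X_{1:n},Y_{1:n}\bigr) \propto p(\epsilon)p(s)\,\frac{1}{K^n}\prod_{j=1}^n p_\delta(\alpha_j^{(c_j)},\beta_j^{(c_j)}\mid\epsilon,s)\,g(X_j,Y_j\mid\alpha_j^{(c_j)},\beta_j^{(c_j)})\prod_{k\neq c_j}\mathbb{I}\bigl((\alpha_j^{(k)},\beta_j^{(k)})\in(0,1)^2\bigr),
\end{equation*}
and I would check the marginal of this extended target in $(\epsilon,s,\{(\alpha_j^{(c_j)},\beta_j^{(c_j)})\}_j)$ is exactly \eqref{eq: prop_correctness}, by symmetry over the label $c_j$ and integrating out the inactive uniforms. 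So sampling from $\bar p_\delta$ and discarding the augmentation gives samples from the desired posterior.

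Next I would describe one MHAAR sweep and match it term by term to the algorithm. Given the current state, one proposes $(\epsilon',s')$ from the log-normal/normal kernel $q$ and, conditionally on the proposal being accepted or rejected, the auxiliary copies are \emph{re-used} as proposals for the latent variables — this is precisely why the acceptance ratio averages the latent weights rather than evaluating them at a single draw. Concretely, for the pair $\bigl((\epsilon,s),(\epsilon',s')\bigr)$ the MHAAR acceptance probability has the form $\min\{1,\ \tfrac{p(\epsilon')p(s')q(\epsilon\mid\epsilon')}{p(\epsilon)p(s)q(\epsilon'\mid\epsilon)}\prod_j \tfrac{\frac1K\sum_k w_j'^{(k)}}{\frac1K\sum_k w_j^{(k)}}\}$ with $w_j^{(k)}=p_\delta(\alpha_j^{(k)},\beta_j^{(k)}\mid\epsilon,s)g(X_j,Y_j\mid\alpha_j^{(k)},\beta_j^{(k)})$ and $w_j'^{(k)}$ its primed analogue, exactly as in Alg.\ \ref{alg: MCMC-DP-Est}. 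The factor $\epsilon'/\epsilon$ in the written ratio is the Jacobian of the $\log$-parametrisation of the proposal for $\epsilon$ (the proposal density for $\log\epsilon$ is symmetric, so $q(\epsilon\mid\epsilon')/q(\epsilon'\mid\epsilon)=\epsilon'/\epsilon$); I would verify this ratio and note the $\frac1K$ factors cancel. After the accept/reject step, the index resampling step (``sample $k$ w.p.\ $\propto \bar w_j^{(k)}$'') is the standard MHAAR conditional refreshment of the label $c_j$ from its full conditional under $\bar p_\delta$, which leaves $\bar p_\delta$ invariant; this closes the sweep.

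Then I would assemble invariance: the $(\epsilon,s)$-update is a valid Metropolis–Hastings move on $\bar p_\delta$ (reversibility from the detailed-balance form of the averaged ratio, which is the content of the MHAAR lemma in \citet{Andrieu_et_al_2020}), the label-resampling is a Gibbs move on $\bar p_\delta$, and the two in composition leave $\bar p_\delta$ invariant; marginalising gives \eqref{eq: prop_correctness} as an invariant distribution. Finally, ergodicity: since $q$ has everywhere-positive density on $(0,\infty)\times\mathbb{R}$, the priors are positive on their supports, and each $g(X_j,Y_j\mid\cdot,\cdot)$ is strictly positive for interior $(\alpha_j,\beta_j)$, the chain is $\bar p_\delta$-irreducible and aperiodic, hence converges to $\bar p_\delta$; I would state this with a standard reference (e.g.\ \citet{Andrieu_et_al_2020} or a textbook) rather than prove minorisation in detail. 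The main obstacle is the second paragraph: precisely identifying the extended invariant distribution and the role of the ``shared'' auxiliary variables across the accept and reject branches, and thereby showing the written averaged ratio is the correct MHAAR ratio — once that identification is pinned down, the rest is routine.
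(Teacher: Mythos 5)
Your proposal is correct and follows essentially the same route as the paper: both identify \texttt{MCMC-DP-Est} as an instance of the MHAAR scheme of \citet{Andrieu_et_al_2020} (their Algorithm 3, with $\theta=(\epsilon,s)$, latent variables $(\alpha_i,\beta_i)$, and uniform auxiliary proposals on $[0,1]^2$) and invoke their validity theorem for invariance and ergodicity. You merely spell out more of the machinery the paper delegates to the citation — the extended target with label variables, the $\epsilon'/\epsilon$ Jacobian of the log-normal proposal, and the index-resampling step as a conditional refreshment — which is consistent with, not different from, the paper's argument.
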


\begin{proof}
The posterior distribution in \eqref{eq: prop_correctness} is of the form
\[
\pi(\theta, z_{1:n}) \propto \eta(\theta) \prod_{i = 1}^{n} \gamma_{t, \theta}(z_{t})
\]
that is introduced in \citet[Section 3.1]{Andrieu_et_al_2020}, where $\theta := (\epsilon, s)$,  $z_{i} := (\alpha_{i}, \beta_{i})$, and 
\[
\gamma_{i, \theta}(z_{i}) = p_{\delta}(\alpha_{i}, \beta_{i} | \epsilon, s) g(X_{i}, Y_{i} | \alpha_{i}, \beta_{i}).
\]
Furthermore, Alg.\ \ref{alg: MCMC-DP-Est} of the paper matches exactly with \citet[Algorithm 3]{Andrieu_et_al_2020} with optional refreshment of $z$ and with $\vartheta  = (\epsilon', s')$, and
\[
q_{t, \theta, \vartheta}(z_{t}) = \mathbb{I}((\alpha, \beta) \in [0, 1]^2)
\]
(the uniform distribution over $[0, 1]^2$). \citet[Algorithm 3]{Andrieu_et_al_2020} has two \emph{move} mechanisms (labeled by $c = 1$ and $c = 2$), and at each iteration, one of them is selected at random. However, since $q_{t, \theta, \vartheta}(z_{t})$ is symmetric with respect to $\theta = (\epsilon, s), \vartheta = (\epsilon', s')$, both moves $(c = 1)$ and $(c = 2)$ in \citet[Algorithm 3]{Andrieu_et_al_2020} are identical and reduces to a single type of move as in Alg.\ \ref{alg: MCMC-DP-Est} of the paper. Finally, by \citet[Theorem 3]{Andrieu_et_al_2020}, Alg.\ \ref{alg: MCMC-DP-Est} is ergodic with invariant distribution given in \eqref{eq: prop_correctness}.
\end{proof}

\section{Extension of the joint probability model for correlated error counts} \label{appndx: Extension of the joint probability model for correlated error counts}

\subsection{Modeling dependent $(X_{i}, Y_{i})$ produced by Alg.\ \ref{alg: Measure MIA - fast}}
When the MIA on a challenge base $(D, z)$ is measured by Alg.\ \ref{alg: Measure MIA - fast}, the decisions are correlated. Let $D_{0, j}$ and $D_{1, j}$ are the $j$'th decisions when the true hypothesis  $H_{0}: \text{Data is $D$}$ and $H_{1}: \text{Data is $D \cup \{z\}$}$, respectively. Therefore, 
\[
X = \sum_{j = 1}^{N} D_{0, j}, \quad Y = \sum_{j = 1}^{N} D_{1, j}.
\]
Two sources of correlation are present.
\begin{enumerate}
\item $D_{0, i}$ and $D_{0, j}$ are correlated for all $j$; and since $D_{0, 1}, \ldots, D_{0, N}$ are interchangeable, the correlation for all $i \neq j$ is the same. We denote the common correlation by $\tau$. Assume for the sake of parsimony of the model that the common correlation is the same for $H_{0}$ and $H_{1}$ and across all the challenge bases. Then we have
\[
\text{Cov}(D_{0, i}, D_{0, j}) = \begin{cases} \alpha(1 - \alpha) & \text{ for } i = j \\ \alpha (1 - \alpha) \tau & \text{ for } i \neq j \end{cases}.
\]
Hence, the first two moments of $X$ are 
\[
\text{Var}(X) = \alpha(1 - \alpha) (N + N(N-1) \tau).
\]
Similarly, the first two moments of $Y$ are 
\[
\text{Var}(Y) = \beta(1 - \beta) (N + N(N-1) \tau).
\]
Since the variances are non-negative, we necessarily have 
\[
-\frac{1}{N-1} < \tau \leq 1.
\]

\item For every $i, j$, $D_{0, i}$ and $D_{1, j}$ are correlated and they have the same correlation (due to interchangeability), say $\rho$. Then, for any $i, j \in \{1, \ldots, N\}$,
\[
\text{Cov}(D_{0, i}, D_{1, j}) = \rho \sqrt{\alpha (1 - \alpha) \beta (1 - \beta)}.
\]
Hence,
\[
\text{Cov}(X, Y) = \sum_{i = 1}^{N} \sum_{j = 1}^{N} \text{Cov}(D_{0, i}, D_{1, j}) = N^{2} \rho \sqrt{\alpha (1 - \alpha) \beta (1 - \beta)}.
\]
Finally, the correlation between $X$ and $Y$ is
\[
\text{Corr}(X, Y) = \frac{N^{2} \rho \sqrt{\alpha (1 - \alpha) \beta (1 - \beta)}}{  \sqrt{(N + N(N-1) \tau)^{2} \alpha(1 - \alpha) \beta(1 - \beta) }} = \frac{N \rho}{1 + (N-1) \tau}.
\]
Since the correlation is between $-1$ and $1$, we necessarily have
\[
 |\rho| \leq \frac{1 + (N-1) \tau}{N}.
\]
\end{enumerate}
Combining everything, a bivariate normal approximation can be made as
\begin{equation}
\begin{bmatrix}X \\ Y \end{bmatrix} | \alpha, \beta, \tau, \rho \sim \mathcal{N}\left( \begin{bmatrix} N \alpha \\ N \beta \end{bmatrix}, \begin{bmatrix} \alpha (1 - \alpha) (N+N(N-1)\tau) & N^{2} \rho \sqrt{\alpha(1 - \alpha) \beta(1 - \beta)} \\ N^{2} \rho \sqrt{\alpha(1 - \alpha) \beta(1 - \beta)} &  \beta (1 - \beta) (N+N(N-1)\tau) \end{bmatrix} \right), \label{eq: bivariate distribution-2}
\end{equation}
where the variables $\tau$ and $\rho$ jointly satisfy $ -\frac{1}{N-1} < \tau < 1$ and  $|\rho| \leq \frac{1 + (N-1) \tau}{N}$. A suitable prior for $\tau, \rho$ is
\[
\tau \sim \mathcal{N}_{[-1/(N-1), 1]}(0, \sigma_{\tau}^{2} ), \quad \rho | \tau \sim \text{Unif}\left(-\frac{1+(N-1)\tau}{N}, \frac{1 + (N-1)\tau}{N} \right).
\]
with density
\begin{equation} \label{eq: prior for tau and rho}
p(\tau, \rho) = \begin{cases} \mathcal{N}_{[-1/(N-1), 1]}(\tau | 0, \sigma_{\tau}^{2}) \frac{N}{2 (1 + (N-1)\tau)} & \text{ for } -\frac{1}{N-1} < \tau < 1,  |\rho| \leq \frac{1 + (N-1) \tau}{N} \\ 0 & \text{ else }\end{cases}. 
\end{equation}

\subsection{Estimating $\tau$ and $\rho$}
Assume for the sake of parsimony of the model that we have the same $\rho, \tau$ across the challenge bases. The joint probability model can be extended to include $\tau, \rho$ and their effect on the conditional distribution of $X_{i}, Y_{i}$. The extended model is 
\begin{equation} \label{eq: joint probability model extended}
p_{\delta}(\epsilon, s, \tau, \rho, \alpha_{1:n}, \beta_{1:n}, X_{1:n}, Y_{1:n}) = p(\epsilon)  p(s) p(\tau, \rho) \prod_{i = 1}^{n} p_{\delta}(\alpha_{i}, \beta_{i} | \epsilon, s) g_{\tau, \rho}(X_{i}, Y_{i} | \alpha_{i}, \beta_{i}),
\end{equation}
where $p(\tau, \rho)$ is given in \eqref{eq: prior for tau and rho} and $g_{\tau, \rho}(X_{i}, Y_{i} | \alpha_{i}, \beta_{i}) $ is indicated by \eqref{eq: bivariate distribution-2}. Alg.\ \ref{alg: MCMC-DP-Est extended} is an extension of \texttt{MCMC-DP-Est} that draws samples for $(\epsilon, s, \tau, \rho)$ from the posterior distribution that is proportional to \eqref{eq: joint probability model extended}. Finally, the correctness of this extension can also be established similarly.

\begin{algorithm}[H]
\caption{\texttt{MCMC-DP-Est}: posterior sampling for $(\epsilon, s, \tau, \rho)$}
\label{alg: MCMC-DP-Est extended}

\For{$i = 1:M$}{
Draw the proposal $\epsilon' \sim \log \mathcal{N}(\log \epsilon, \sigma_{q, \epsilon}^{2})$ and $s' \sim \mathcal{N}(s, \sigma_{q,s}^{2})$, $\tau' \sim \mathcal{N}(s, \sigma_{q,\tau}^{2})$, $\rho' \sim \mathcal{N}(\rho, \sigma_{q,\rho}^{2})$.

\For{$j = 1:n$}{
Set $(\alpha_{j}^{(1)}, \beta_{j}^{(1)}) = (\alpha_{j}, \beta_{j})$.

Sample $\alpha_{j}^{(k)}, \beta_{j}^{(k)} \overset{\text{iid}}{\sim} \text{Unif}(0, 1)$ for $k = 2, \ldots, K$.

Calculate the weights
\begin{align*}
w_{j}^{(k)} &= p_{\delta}(\alpha_{j}^{(k)}, \beta_{j}^{(k)} | \epsilon, s) g_{\tau, \rho}(X_{j}, Y_{j} | \alpha_{j}^{(k)}, \beta_{j}^{(k)}), \quad k = 1, \ldots, K\\
w_{j}^{\prime(k)} &=  p_{\delta}(\alpha_{j}^{(k)}, \beta_{j}^{(k)} | \epsilon', s') g_{\tau', \rho'}(X_{j}, Y_{j} |  \alpha_{j}^{(k)}, \beta_{j}^{(k)}) \quad k = 1, \ldots, K
\end{align*}
}

Acceptance probability: 
\[
A = \min \left\{1, \frac{p(s') p(\epsilon')  \epsilon'}{p(s) p(\epsilon) \epsilon } \frac{ p(\tau', \rho')}{p(\tau, \rho)}\prod_{j = 1}^{n}\frac{ \sum_{k = 1}^{K} w_{j}^{\prime(k)}}{\sum_{k = 1}^{K} w_{j}^{(k)}} \right\}.
\]

\textbf{Accept/Reject}: Draw $u \sim \text{Unif}(0, 1)$.

\uIf{$u \leq A$}{
Set $\epsilon = \epsilon', s = s', \tau = \tau', \rho = \rho'$, and $\bar{w}_{1:n}^{(1:K)} = w_{1:n}^{\prime (1:K)}$.
}\Else{
Keep $\epsilon, s, \tau, \rho$ and set $\bar{w}_{1:n}^{(1:K)} = w_{1:n}^{(1:K)}$.
}
\For{$j = 1, \ldots, n$}{
Sample $k \in \{1, \ldots, K\}$ w.p.\ $\propto \bar{w}_{j}^{(k)}$ and set $(\alpha_{j}, \beta_{j}) = (\alpha_{j}^{(k)}, \beta_{j}^{(k)})$.
}
Store $\epsilon^{(i)} = \epsilon, s^{(i)} = s, \tau^{(i)} = \tau, \rho^{(i)} = \rho$.
}

\end{algorithm}

\subsection{Additional details and results for the experiments in Section 4.2}

We used random walk proposals for $\log \epsilon, s, \tau, \rho$ with proposal variances $\sigma_{q, \epsilon}^{2} = 10^{-2}$, $\sigma_{q, s}^{2} = 10^{-4}$, $\sigma_{q, \tau}^{2} = 10^{-6}$, $\sigma_{q, \rho}^{2} = 10^{-6}$. The hyperparameters of the priors are $\epsilon \sim \mathcal{N}_{[0, \infty)}(0, 10)$, $s \sim \text{Unif}(0, 1)$, and the hyperparameters in \eqref{eq: prior for tau and rho} are taken as $\sigma_{\tau}^{2} = 10^{-4}$ and $\sigma_{\rho}^{2} = 10^{-2}$. 

The main paper shows the 2D histograms of $\epsilon, s$ and sample ACF obtained from the samples from \texttt{MCMC-DP-Est} for six versions of $\mathcal{A}$. In addition, we report the trace plots for those samples in Figure \ref{fig: trace plots}.

\begin{figure}[h!]
\centerline{
\includegraphics[scale = 0.6]{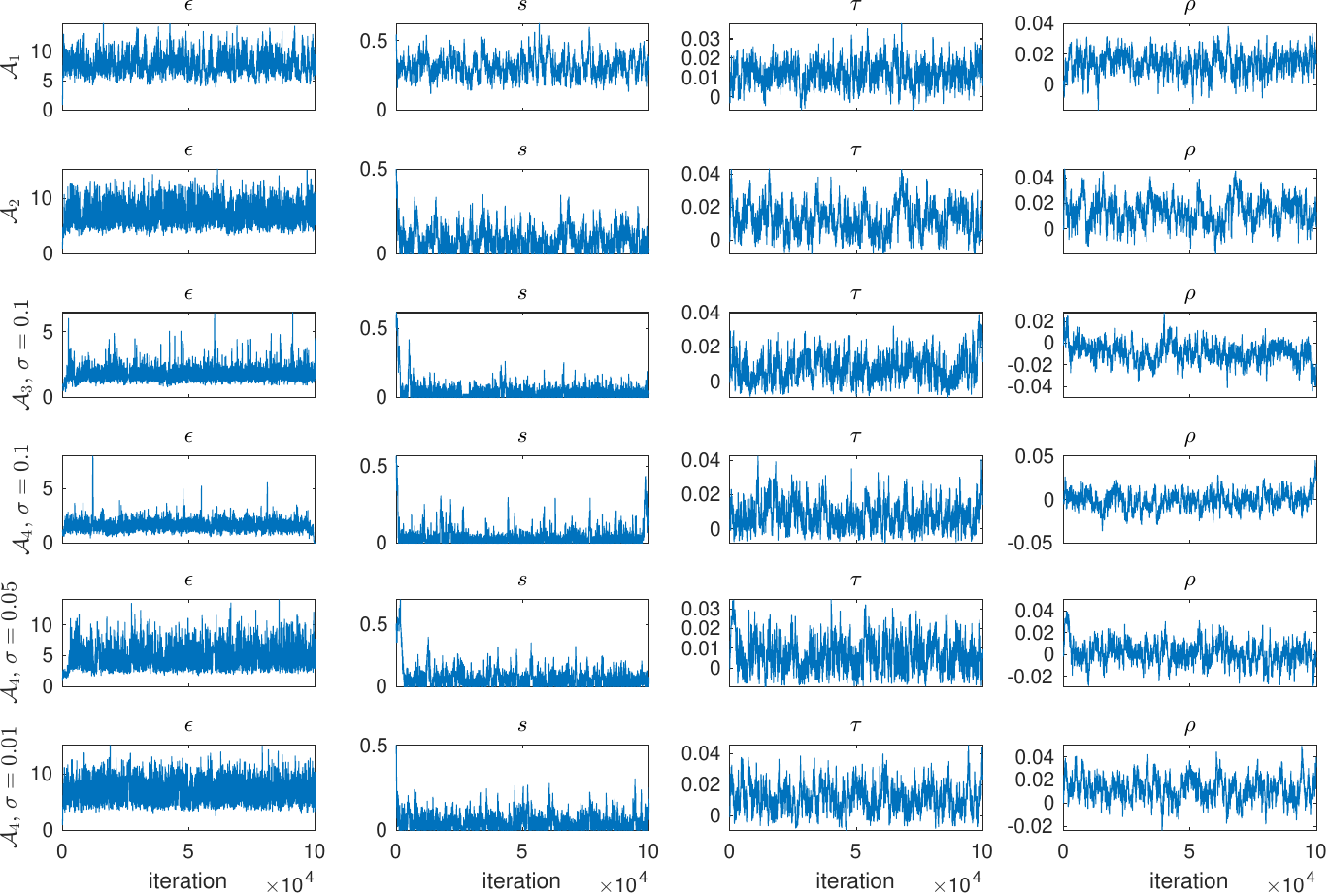}}
\caption{Trace plots for all the four parameters $\epsilon$, $s$, $\tau$, $\rho$.}
\label{fig: trace plots}
\end{figure}

\section{Modeling Dependent MIAs} \label{appndx: Modeling Dependent MIAs}
Dependent MIAs can occur, for example, when their challenge bases share a common $z$ (or $D$), or they share a common challenge base but use different decision rules. 

Assume there are $n$ groups of dependent MIAs. If MIAs in the same group are expected to have close performance, we can model their joint distribution as follows.  For group $i$ of size $m_{i}$, let
\[
(\alpha_{i}, \beta_{i}) | \epsilon, \delta, s \overset{i.i.d.}{\sim}
\text{Uniform}(\mathcal{R}_{s}(\epsilon, \delta)), \quad i = 1, \ldots, m_{i}
\]
denote the average test performance of group $i$. Then, the error performances of the MIAs in the $i$'th group can be modeled as
\[
\alpha_{ij}, \beta_{ij} | (\alpha_{i}, \beta_{i}), \epsilon, \delta \overset{iid}{\sim} \mathcal{P}((\alpha_{i}, \beta_{i}), \tau), \quad j = 1, \ldots, m_{i},
\]
where $\mathcal{P}((\alpha_{i}, \beta_{i}), \tau)$ is a bivariate probability distribution that is symmetric around $(\alpha_{i}, \beta_{i})$ and has variance $\tau$. Examples include bivariate normal distribution and bivariate uniform distribution.

Given $\alpha_{ij}, \beta_{ij}$, the error counts $(X_{ij}, Y_{ij})$ are distributed as before, and they are conditionally independent, i.e\,
\[
(X_{ij}, Y_{ij})| \alpha_{i}, \alpha_{i1}, \ldots, \alpha_{in_{i}}, \epsilon, \delta, s \sim g(\cdot | \alpha_{ij}, \beta_{ij}), 
\]
independently for $j = 1, \ldots, m_{i}$. The conditional distribution $g(\cdot | \alpha_{ij}, \beta_{ij})$ only depends on how the $i, j$'th MIA's performance is measured. In particular, it does not depend on the dependency among the true performances of the MIAs.

The resulting hierarchical model is given in Figure \ref{fig: DAG extended model}. A precise Bayesian approach would target the posterior distribution of $\epsilon$, $s$, $\alpha_{i}$'s and $\alpha_{ij}$'s, given $\{ X_{ij}, Y_{ij}: j = 1, \ldots, n_{i}; i = 1, \ldots, m \}$, which can be sampled from by various MCMC algorithms in the literature. 

Alternatively, one may compromise theoretical precision for ease in numerical computation and instead propose to target the posterior distribution of $\epsilon$ given 
\[
X_{i} = \sum_{j = 1}^{J_{i}} X_{ij}, \quad Y_{i} = \sum_{j = 1}^{J_{i}} Y_{ij}, \quad i = 1, \ldots, n.
\]
and use a normal approximation for the joint distribution of $X_{i}, Y_{i}$ via moment matching.
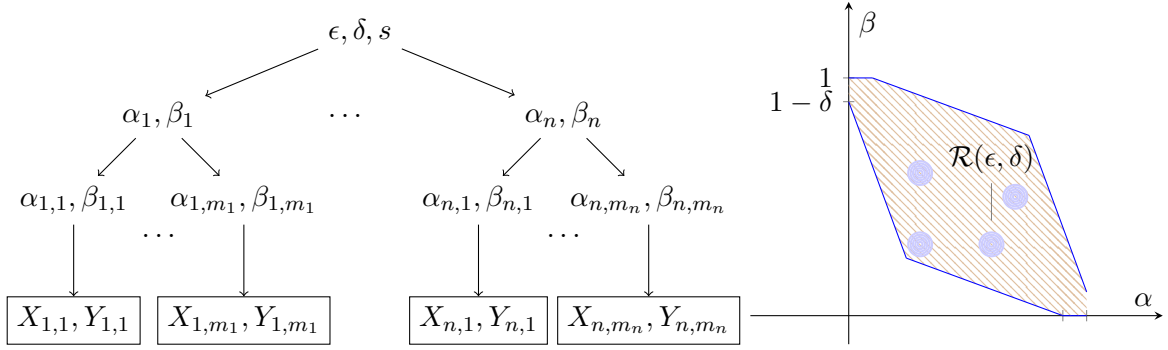
\begin{figure}[h!]
\centerline{
\begin{tikzpicture}[
    ->,
    auto,
    node distance=1.6cm, minimum size=0cm,
    main node/.style={}
]
\node[main node] (DP) {$\epsilon, \delta, s$};
\node[main node] (T1) [below left=0.5cm and 1.5cm of DP] {$\alpha_{1}, \beta_{1}$};
\node[main node] (T11) [below left of=T1] {$\alpha_{1, 1}, \beta_{1, 1}$};
\node[] (T1b) [below of=T1] {$\ldots$};
\node[main node] (T1J) [below right of=T1] {$\alpha_{1, m_{1}}, \beta_{1, m_{1}}$};
\node[draw, rectangle] (X11) [below of =T11] {$X_{1, 1}, Y_{1, 1}$};
\node[draw, rectangle] (X1J) [below of =T1J] {$X_{1, m_{1}}, Y_{1, m_{1}}$};
\node[] (Tn) [right=1.5cm of T1] {$\ldots$};
\node[main node] (Tn) [below right=0.5cm and 1.5cm of DP] {$\alpha_{n}, \beta_{n}$};
\node[main node] (Tn1) [below left of=Tn] {$\alpha_{n, 1}, \beta_{n, 1}$};
\node[] (Tnb) [below of=Tn] {$\ldots$};
\node[main node] (TnJ) [below right of=Tn] {$\alpha_{n, m_{n}}, \beta_{n, m_{n}}$};
\node[draw, rectangle] (Xn1) [below of =Tn1] {$X_{n, 1}, Y_{n, 1}$};
\node[draw, rectangle](XnJ) [below of =TnJ] {$X_{n, m_{n}}, Y_{n, m_{n}}$};

\path[every node]
    (DP) edge[] node[above] {} (T1)
    (DP) edge[] node[above] {} (Tn)
    (T1) edge[] node[above] {} (T11)
        edge[] node[above] {} (T1J)
    (T11) edge[] node[above] {} (X11)
    (T1J) edge[] node[above] {} (X1J)
    (Tn) edge[] node[above] {} (Tn1)
        edge[] node[above] {} (TnJ)
    (Tn1) edge[] node[above] {} (Xn1)
    (TnJ) edge[] node[above] {} (XnJ);
\end{tikzpicture}
\hfill
\begin{tikzpicture}
\pgfmathsetmacro{\dDP}{0.1}  
\pgfmathsetmacro{\eDP}{1}  
\begin{axis}[axis lines=middle,
            axis equal, 
            xlabel=$\alpha$,
            ylabel=$\beta$,
            enlargelimits,
            ytick={0, 0.9, 1},
            yticklabels={0, $1-\delta$, 1},
            xtick={0, 0.9, 1},
            xticklabels={0,$1-\delta$, 1}
            xmin=-0, xmax=1.2, ymin=0, ymax=1.2, scale = 0.8]
\addplot[name path=B1,blue,domain={0:(1 - \dDP)/(1 + e^\eDP)}] {(1-\dDP)-e^(\eDP)*(x)};
\addplot[name path=B2,blue,domain={((1 - \dDP)/(1 + e^\eDP)):(1 - \dDP)}] {(1 - \dDP)/(1+e^\eDP)-e^(-\eDP)*(x-(1 - \dDP)/(1+e^\eDP))};
\addplot[name path=B0,blue,domain={(1 - \dDP):1}] {0};
\addplot[name path=A0,blue,domain={0: \dDP}] {1};
\addplot[name path=A1,blue,domain={\dDP: (1 + \dDP*e^-\eDP)/(1+e^-\eDP)}] {1 - (x-\dDP)*e^(-\eDP)};
\addplot[name path=A2,blue,domain={(1 + \dDP*e^(-\eDP))/(1+e^(-\eDP)): 1}] {(1 + \dDP*e^(-\eDP))/(1+e^(-\eDP)) - e^(\eDP)*(x-(1 + \dDP*e^(-\eDP))/(1+e^(-\eDP)))};
\addplot[pattern=north west lines, pattern color=brown!50]fill between[of=B1 and A0, soft clip={domain=0:1}];
\addplot[pattern=north west lines, pattern color=brown!50]fill between[of=B1 and A1, soft clip={domain=0:1}];
\addplot[pattern=north west lines, pattern color=brown!50]fill between[of=B2 and A2, soft clip={domain=0:1}];
\addplot[pattern=north west lines, pattern color=brown!50]fill between[of=B0 and A2, soft clip={domain=0:1}];
\node[coordinate,pin=90:{$\mathcal{R}(\epsilon, \delta)$}] at (axis cs:0.6,0.4){};
\foreach \cx/\cy in {0.3/0.3, 0.3/0.6, 0.6/0.3, 0.7/0.5} {
    \foreach \r/\i in {0.01/1, 0.02/2, 0.03/3, 0.04/4, 0.05/5} {
        \addplot[domain=0:360, samples=100, thick, color=blue!20]
                ({\cx + \r*cos(x)}, {\cy + \r*sin(x)});
    };
};
\end{axis}
\end{tikzpicture}
}
\caption{Left: DAG for the dependence structure among the variables involved in the model with dependent tests. Right: An illustration with $n = 4$ groups of tests with $s = 0$. The centers of the circles represent $\alpha_{i}, \beta_{i}$ and the circles themselves represent the conditional distributions of $(\alpha_{ij}, \beta_{ij})$ given $\alpha_{i}, \beta_{i}$}
\label{fig: DAG extended model}
\end{figure}

\section{Most powerful test for comparing two normal distributions with a single observation} \label{appndx: Most powerful test for comparing two normal distributions with a single observation}
According to the Neyman-Pearson Lemma, the rejection rule most powerful test for comparing two univariate normal distributions
\[
H_{0}: X \sim \mathcal{N}(\mu_{0}, \sigma_{0}^{2}), \quad H_{1}: X \sim \mathcal{N}(\mu_{1}, \sigma_{1}^{2})
\]
with a single observation has the form
\[
\text{LR} = \frac{\frac{1}{\sqrt{2 \pi \sigma_{0}^{2}} } \exp\{-\frac{1}{2 \sigma_{0}^{2}} (x - \mu_{0})^{2} \}}{\frac{1}{\sqrt{2 \pi \sigma_{1}^{2}} } \exp\{-\frac{1}{2 \sigma_{1}^{2}} (x - \mu_{1})^{2} \}} \leq c,
\]
or 
\[
\left( \frac{1}{\sigma_{1}^{2}} - \frac{1}{\sigma_{0}^{2}} \right) x^{2} + 2 \left( \frac{\mu_{0}}{\sigma_{0}^{2}} - \frac{\mu_{1}}{\sigma_{1}^{2}} \right) x \leq c'.
\]
Let $R = \left(\frac{\mu_{0}}{\sigma_{0}^{2}} - \frac{\mu_{1}}{\sigma_{1}^{2}}\right) / \left( \frac{1}{\sigma_{1}^{2}} - \frac{1}{\sigma_{0}^{2}} \right)$. Let $Z \sim \mathcal{N}(0, 1)$, Let $Y =  (Z + \delta)^{2}$ with  $\delta = \frac{\mu_{0}}{\sigma_{0}} + \frac{R}{\sigma_{0} }$. Then, $Y$ has a noncentral chi-square distribution $\chi^{2}_{1, \delta^{2}}$.

We inspect three cases in terms of the sign of $\sigma_{0}^{2} - \sigma_{1}^{2}$:
\begin{enumerate}
\item Assume $\sigma_{0}^{2} > \sigma_{1}^{2}$. Then the rejection rule becomes $\left( x + R \right)^{2} \leq c_{\alpha}$, To determine $c_{\alpha}$, write
\[
P\left[\left( X + R \right)^{2} \leq c_{\alpha} \right]  = P\left[\left(  Z + \frac{\mu_{0}}{\sigma_{0}} + \frac{R }{\sigma_{0}} \right)^{2} \leq \frac{c_{\alpha} }{\sigma_{0}^{2}}\right] = P\left(Y^{2} \leq \frac{c_{\alpha}}{\sigma_{0}^{2}}\right) = \alpha.
\]
Therefore, $c_{\alpha} = \sigma_{0}^{2} \cdot F^{-1}_{\chi^{2}_{1, \delta}}(\alpha)$.

\item Assume $\sigma_{0}^{2} < \sigma_{1}^{2}$. The rule reduces to $\left( x +R \right)^{2} \geq c_{\alpha}$. To determine $c_{\alpha}$, write
\[
P\left[\left( X + R \right)^{2} \geq c_{\alpha} \right]  = P\left[\left(  Z + \frac{\mu_{0}}{\sigma_{0}} + \frac{R}{\sigma_{0} } \right)^{2} \geq \frac{c_{\alpha} }{\sigma_{0}^{2}}\right] = P\left( Y^{2} \geq \frac{c_{\alpha} }{\sigma_{0}^{2}} \right) = \alpha.
\]
Therefore, $c_{\alpha} = \sigma_{0}^{2} \cdot F^{-1}_{\chi^{2}_{1, \delta}}(1 - \alpha)$.

\item Assume $\sigma_{0}^{2} = \sigma_{1}^{2}$. Then, the decision rule is $\text{sign}(\mu_{1} -\mu_{0}) (X - \mu_{0})/\sigma > F^{-1}_{\mathcal{N}(0, 1)}(1-\alpha)$.
\end{enumerate}

\end{document}